\newtheorem{assumption}{Assumption}
\newcommand{\loss}{\ell}
\newcommand{\Loss}{\mathcal{L}}
\newcommand{\X}{\mathcal{X}}
\newcommand{\tloss}{\wt{\ell}}
\newcommand{\real}{\mathbb{R}}
\newcommand{\Qstar}{\mathcal{Q}^*}
\newcommand{\PP}[1]{\mathbb{P}\left[#1\right]}
\newcommand{\EE}[1]{\mathbb{E}\left[#1\right]}
\newcommand{\ev}[1]{\left\{#1\right\}}
\newcommand{\pa}[1]{\left(#1\right)}
\newcommand{\bpa}[1]{\bigl(#1\bigr)}
\newcommand{\Bpa}[1]{\Bigl(#1\Bigr)}
\renewcommand{\th}{\ensuremath{^{\mathrm{th}}}}
\def\argmin{\mathop{\mbox{ \rm arg\,min}}}
\newcommand{\bloss}{\overline{\loss}}
\newcommand{\bc}{\overline{c}}
\newcommand{\wt}{\widetilde}
\newcommand{\ra}{\rightarrow}
\newcommand{\transpose}{^\mathsf{\scriptscriptstyle T}}
\newcommand{\norm}[1]{\left\|#1\right\|}
\newcommand{\onenorm}[1]{\norm{#1}_1}
\newcommand{\infnorm}[1]{\norm{#1}_\infty}
\newcommand{\spannorm}[1]{\norm{#1}_s}
\newcommand{\kl}[2]{D\pa{\left.#1\right\|#2}}
\definecolor{PalePurp}{rgb}{0.66,0.57,0.66}
\newenvironment{proofsketch}%
{%
 \par\noindent{\bfseries\upshape Proof sketch\ }%
}%
{\jmlrQED}
\title[Fast rates for online learning in LMDPs]{Fast rates for online learning in 
\\Linearly Solvable Markov Decision Processes}
\author[Neu and G\'omez]{\Name{Gergely Neu} \Email{gergely.neu@gmail.com}\\
 \addr Universitat Pompeu Fabra, Barcelona, Spain
 \AND
 \Name{Vicen\c c  G\'omez} \Email{vicen.gomez@upf.edu}\\
 \addr Universitat Pompeu Fabra, Barcelona, Spain
}
\begin{document}

\maketitle

\begin{abstract}
We study the problem of online learning in a class of Markov decision processes known as \emph{linearly solvable MDPs}. In the stationary 
version of this problem, a learner interacts with its environment by directly controlling the state transitions, attempting to balance 
a fixed state-dependent cost and a certain smooth cost penalizing extreme control inputs. In the current paper, we consider an online 
setting where the state costs may change arbitrarily between consecutive rounds, and the learner only observes the costs at the end of 
each respective round. We are interested in constructing algorithms for the learner that guarantee small regret against the best 
stationary control policy chosen in full knowledge of the cost sequence. Our main result is showing that the smoothness of the control cost 
enables the simple algorithm of \emph{following the leader} to achieve a regret of order $\log^2 T$ after $T$ rounds, vastly
improving on the best known regret bound of order $T^{3/4}$ for this setting.
\end{abstract}

\begin{keywords}
Online learning, fast rates, Markov decision processes, optimal control
\end{keywords}

\section{Introduction}
We consider the problem of online learning in Markov decision processes (MDPs) where a learner sequentially interacts with an environment 
by repeatedly taking actions that influence the future states of the environment while incurring some immediate costs. The goal of the 
learner is to choose its actions in a way that the accumulated costs are as small as possible. Several variants of this problem have been 
well-studied in the literature, primarily in the case where the costs are assumed to be independent and identically distributed 
\citep{sutton,Puterman1994,ndp,Sze10}. In the current paper, we consider the case where the costs are generated by an arbitrary external 
process and the learner aims to minimize its total loss during the learning procedure---conforming to the learning paradigm known as 
\emph{online learning} \citep{CBLu06:Book,SS12}. In the online-learning framework, the performance of the learner is measured in terms of 
the \emph{regret} defined as the gap between the total costs incurred by the learner and the total costs of the best comparator chosen
from a pre-specified class of strategies. In the case of online learning in MDPs, a natural class of strategies is the set of all 
state-feedback policies: several works studied minimizing regret against this class both in the 
stationary-cost~\citep{bartlett09regal,jaksch10ucrl,AYSze11} and the non-stochastic 
setting~\citep{even-dar09OnlineMDP,yu09ArbitraryRewards,neu10o-ssp,neu12ssp-trans,ZiNe13,DGS14,neu14o-mdp-full,AYBK14}. In the 
non-stochastic setting, most works 
consider MDPs with unstructured, finite state spaces and guarantee that the regret increases no faster than~$O(\sqrt{T})$ as the number of 
interaction rounds $T$ grows large. A notable exception is the work of \citet{AYBK14}, who consider the special 
case of (continuous-state) linear-quadratic control with arbitrarily changing target states, and propose an algorithm that guarantees a 
regret bound of $O(\log^2 T)$.

In the present paper, we study another special class of MDPs that turns out to allow fast rates. Specifically, we consider the class of 
so-called \emph{linearly solvable MDPs} (in short, LMDPs), first proposed and named so by \citet{Tod06}.
This class takes its name 
after the special property that the Bellman optimality equations characterizing
the optimal behavior policy take the form of a system of linear equations,
which makes optimization remarkably straightforward in such problems. The
continuous formulation (in both space and time) was discovered independently
by~\citet{Kap05} and is known as \emph{path integral control}.
LMDPs have many interesting properties. For example, optimal control laws for
LMDPs can be linearly combined to derive composite optimal control laws
efficiently~\citep{Tod09}. Also, the inverse optimal control problem in LMDPs can be expressed as a convex optimization problem~\citep{dvijotham2010inverse}.
LMDPs generalize an existing duality between optimal control computation and Bayesian inference~\citep{todorov2008general}.
Indeed, the popular belief propagation algorithm used in dynamic probabilistic graphical models is
equivalent the the power iteration method used to solve LMDPs~\citep{KGO12}.

The LMDP framework has found applications in robotics~\citep{latentkl,ariki}, crowdsourcing~\citep{Abbasi}, and controlling the growth 
dynamics of complex networks~\citep{klnetwork}. The related path integral control framework of \citet{Kap05} has been applied in several 
real-world tasks, including robot navigation~\citep{KinjoFN2013}, motor skill 
reinforcement learning~\citep{TheodorouJMLR10a,rombokas2013reinforcement,pireps}, aggressive car maneuvering~\citep{aggressive} or 
autonomous flight of teams of quadrotors~\citep{pi_uavs}.

In the present paper, we show that besides the aforementioned properties, the structure of LMDPs also enables constructing efficient online 
learning procedures with very low regret. In particular, we show that, under some mild assumptions on the structure of the LMDP, the 
(conceptually) simplest online learning strategy of \emph{following the leader} guarantees a regret of order $\log^2 T$, vastly improving 
over the best known previous result by \citet*{GRW12}, who prove a regret bound of order $T^{3/4+\epsilon}$ for arbitrarily small 
$\epsilon>0$ under the same assumptions. Our approach is based on the observation that the optimal control law arising from the LMDP 
structure is a smooth function of the underlying cost function, enabling rapid learning without any regularization whatsoever.

The rest of the paper is organized as follows. Section~\ref{sec:bck} introduces the formalism of LMDPs and summarizes some basic facts that 
our technical content is going to rely on. Section~\ref{sec:online} describes our online learning model. Our learning algorithm is 
described in Section~\ref{sec:ftl} and analyzed in Section~\ref{sec:analysis}. Finally, we draw conclusions in Section~\ref{sec:discussion}.

\paragraph{Notation.} We will consider several real-valued functions over a finite state-space $\X$, and we will often treat these functions 
as finite-dimensional (column) vectors endowed with the usual definitions of the $\ell_p$ norms. The set of probability distributions over 
$\X$ will be denoted as $\Delta(\X)$. Indefinite sums with running variables $x,y$ or $s$ are understood to run through all $\X$.

\section{Background on linearly solvable MDPs}\label{sec:bck}
This section serves as a quick introduction into the formalism of linearly solvable MDPs (LMDPs, \cite{Tod06}). These decision processes 
are defined by the tuple $\ev{\X,P,c}$, where $\X$ is a finite set of \emph{states}, $P:\X \ra \Delta(\X)$ is a transition kernel called 
the \emph{passive dynamics} (with $P(x'|x)$ being the probability of the process moving to state $x'$ given the previous state $x$) and 
$c:\X\ra[0,1]$ is the \emph{state-cost function}. Our Markov decision process is a sequential decision-making problem where the initial 
state $X_0$ is drawn from some distribution $\mu_0$, and the following steps are repeated for an indefinite number of rounds $t=1,2,\dots$:
\begin{enumerate}[leftmargin=.7cm]
 \item The learner chooses a transition kernel $Q_t:\X\ra \Delta(\X)$ satisfying $\mathop{supp}Q_t(\cdot|x) \subseteq 
\mathop{supp}P(\cdot|x)$ 
for all $x\in\X$.
 \item The learner observes $X_t\in\X$ and draws the next state $X_{t+1}\sim Q(\cdot|X_t)$.
 \item The learner incurs the cost
 \[
  \ell(X_t,Q_t) = c(X_t) + \kl{Q_t(\cdot|X_t)}{P(\cdot|X_t)},
 \]
 where $\kl{q}{p}$ is the relative entropy (or Kullback-Leibler divergence) between the probability distributions $p$ 
and $q$ defined as $\kl{q}{p}=\sum_x q(x)\log \frac{q(x)}{p(x)}$.
\end{enumerate}

The state-cost function $c$ should be thought of as specifying the objective for the learner in the MDP, while the relative-entropy term 
governs the costs associated with significant deviations from the passive dynamics. Accordingly, we refer to this component as the 
\emph{control 
cost}. A central question in the theory of Markov decision problems is finding a behavior policy that minimizes (some notion of) the 
long-term total costs. In this paper, we consider the problem of \emph{minimizing the long-term average cost-per-stage} 
$\lim\sup_{T\ra\infty} \frac 1T \sum_{t=1}^T \loss(X_t,Q_t)$. Assuming that the passive dynamics $P$ is aperiodic and irreducible, this 
limit is minimized by a \emph{stationary} policy $Q$ (see, e.g., \citet[Sec.~8.4.4]{Puterman1994}). Below, we provide two distinct 
derivations for the optimal stationary policy that minimizes the average costs under this assumption.

\subsection{The Bellman equations}\label{sec:bellman}
We first take an approach rooted in dynamic programming \citep{Ber07:DPbookVol2}, following \citet{Tod06}. Under our assumptions, the 
optimal stationary policy minimizing the average cost is given by finding the solution to the Bellman optimality equation
\begin{equation}\label{eq:bellman_V}
 v(x) = c(x) - \lambda + \min_{q\in\Delta(\X)} \ev{\kl{q}{P(\cdot|x)} + \sum_{x'} q(x') v(x')}
\end{equation}
for all $x\in\X$, where $v$ is called the \emph{optimal value function} and $\lambda\in\real$ is the average cost associated with the 
optimal policy\footnote{This solution is guaranteed to be unique up to a constant shift of the values: if $v$ is a solution, then so is 
$v + a$ for any $a\in \real$. Unless stated otherwise, we will assume that $v$ is such that $v(x_0) = 0$ holds for a fixed state 
$x_0\in\X$.}.  Linearly solvable MDPs get their name from the fact that the Bellman optimality 
equation can be rewritten in a simple 
linear form. To see this, observe that by elementary calculations involving Lagrange multipliers, we have
\begin{align*}
 \min_{q\in\Delta(\X)} \ev{\kl{q}{P(\cdot|x)} + \sum_{x'} q(x') v(x')} =& -\log \sum_{x'} P(x'|x) e^{-v(x')},
\end{align*}
so, after defining the exponentiated value function $z(x) = e^{-v(x)}$ for all $x$, plugging into Equation~\eqref{eq:bellman_V} and 
exponentiating both sides gives
\begin{equation}\label{eq:bellman_Z}
 z(x) = e^{\lambda - c(x)} \sum_{x'} P(x'|x) z(x').
\end{equation}
Rewriting the above set of equations in matrix form, we obtain the linear equations
\[
 e^{-\lambda} z = GPz,
\]
where $G$ is a diagonal matrix with $G_{ii} = e^{-c(i)}$. By the \emph{Perron-Frobenius theorem} (see, e.g., Chapter~8 of \cite{Mey00}) 
concerning positive matrices, the above system of linear equations has a unique\footnote{As in the case of the Bellman equations, this 
solution is unique up to a \emph{scaling} of $z$.} solution satisfying $z(x)\ge 0$ for all $x$, and this 
eigenvector corresponds to the largest eigenvalue $e^{-\lambda}$ of $GP$. Since the solution of the Bellman optimality equation 
\eqref{eq:bellman_V} is unique (up to a constant shift corresponding to a constant scaling of $z$), we obtain that $\lambda$ is the average 
cost of the optimal policy. In summary, the Bellman optimality 
equation takes the form of a \emph{Perron--Frobenius eigenvalue problem}, which can be efficiently solved by iterative methods such as the 
well-known power method for finding top eigenvectors. Finally, getting back to the basic form~\eqref{eq:bellman_V} of the Bellman equations, 
we can conclude after simple calculations that the optimal policy can be computed  for all $x,x'$ as
\[
 Q(x'|x) = \frac{P(x'|x) z(x')}{\sum_y P(y|x) z(y)}.
\]

\subsection{The convex optimization view}\label{sec:convex}
We also provide an alternative (and, to our knowledge, yet unpublished) view of the optimal control problem in LMDPs, based on convex 
optimization. For the purposes of this paper, we find this form to be more insightful, as it enables us to study our 
learning problem in the framework of online convex optimization \citep{Haz11,Haz16,SS12}. To derive this form, observe that under 
our assumptions, every feasible policy $Q$ induces a stationary distribution $\mu_Q$ over the state space $\X$ satisfying $\mu_Q\transpose 
= \mu_Q\transpose Q$. This stationary distribution and the policy together induce a distribution $\pi_Q$ over $\X^2$ defined for all $x,x'$ 
as $\pi_Q(x,x') = \mu_Q(x) Q(x'|x)$. We will call $\pi_Q$ as the \emph{stationary transition measure} induced by $Q$, which is motivated by 
the observation that $\pi_Q(x,x')$ corresponds to the probability of observing the transition $x\ra x'$ in the equilibrium state: 
$\pi_Q(x,x') = \lim_{T\ra \infty} \frac 1T \sum_{t=1}^T \PP{X_t = x, X_{t+1}=x'}$. Notice that, with this notation, the average 
cost-per-stage of policy $Q$ can be rewritten in the form
\[
\begin{split}
 &\lim_{T\ra\infty} \frac 1T \sum_{t=1}^T \EE{\loss(X_t,Q)} = \sum_{x} \mu_Q(x) \Bpa{c(x) + \kl{Q(\cdot|x)}{P(\cdot|x)}}
 \\
 &\qquad\qquad= \sum_{x,x'} \pi_Q(x,x') \pa{c(x) + \log\frac{\pi_Q(x,x')}{P(x'|x)\sum_y \pi_Q(x,y)}}
 \\
 &\qquad\qquad= \sum_{x,x'} \pi_Q(x,x') \log\frac{\pi_Q(x,x')}{\sum_y \pi_Q(x,y)} + \sum_{x,x'} \pi_Q(x,x') \pa{c(x) - \log\pa{P(x'|x)}}.
\end{split}
\]
The first term in the final expression above is the \emph{negative conditional entropy} of $X'$ relative to $X$, where $(X,X')$ is a pair 
of random states drawn from $\pi_Q$. Since the negative conditional entropy is convex in $\pi_Q$ (for a proof, see 
Appendix~\ref{app:convexity}) and the second term in the expression is linear in $\pi_Q$, we can see that $\lambda$ is a convex function of 
$\pi_Q$. This suggests that we can 
view the optimal control problem as having to find a feasible stationary transition measure $\pi$ that minimizes the expected costs. 
In short, defining 
\begin{equation}\label{eq:statcost}
 f(\pi;c) = \sum_{x,x'} \pi(x,x') \pa{c(x) + \log\frac{\pi(x,x')}{P(x'|x)\sum_y \pi(x,y)}}
\end{equation}
and $\Delta(M)$ as the (convex) set of feasible stationary transition measures $\pi$ satisfying
\begin{equation}\label{eq:statdist}
\begin{split}
 \sum_{x'} \pi(x,x') &= \sum_{x''} \pi(x'',x) \;\;\;\;\;\,\qquad (\forall x),
 \\
 \sum_{x,x'} \pi(x,x')&=1,
 \\
 \pi(x,x') &\ge 0 \qquad\qquad\qquad\qquad (\forall x,x'),
 \\
 \pi(x,x') &= 0 \qquad\qquad\qquad\qquad (\forall x,x': P(x'|x)=0),
\end{split}
\end{equation}
the optimization problem can be succinctly written as $\min_{\pi\in\Delta(M)} f(\pi;c)$. In Appendix~\ref{app:dual}, we provide a 
derivation of the optimal control given by Equation~\eqref{eq:bellman_Z} starting from the formulation given above. We also remark that our 
analysis will heavily rely on the fact that $f(\pi;c)$ is affine in $c$.

\section{Online learning in linearly solvable MDPs}\label{sec:online}
We now present the precise learning setting that we consider in the present paper.
We will study an online learning scheme where for each round $t=1,2,\dots,T$, the following steps are repeated:
\begin{enumerate}[leftmargin=.7cm]
 \item The learner chooses a transition kernel $Q_t:\X\ra \Delta(\X)$ satisfying $\mathop{supp}Q_t(\cdot|x) \subseteq 
\mathop{supp}P(\cdot|x)$ for all $x\in\X$.
 \item The learner observes $X_t\in\X$ and draws the next state $X_{t+1}\sim Q_t(\cdot|X_t)$.
 \item Obliviously to the learner's choice, the environment chooses state-cost function $c_t:\X\ra[0,1]$.
 \item The learner incurs the cost
 \[
  \ell_t(X_t,Q_t) = c_t(X_t) + \kl{Q_t(\cdot|X_t)}{P(\cdot|X_t)}.
 \]
 \item The environment reveals the state-cost function $c_t$.
\end{enumerate}
The key change from the stationary setting described in the previous section is that the state-cost function now may \emph{change 
arbitrarily} between each round, and the learner is only allowed to observe the costs \emph{after it has made its decision}. We stress 
that we assume that the learner \emph{fully knows} the passive dynamics, so the only difficulty comes from having to deal with the 
changing costs. As usual in the online-learning literature, our goal is to do nearly as well as the best \emph{stationary} policy 
chosen in hindsight after observing the entire sequence of cost functions. To define our precise performance measure, we first define the 
average reward of a policy $Q$ as
\[
 \Loss_T(Q) = \EE{\sum_{t=1}^T \loss_t(X_t',Q)},
\]
where the state trajectory $X_t'$ is generated sequentially as $X_t'\sim Q(\cdot|X_{t-1}')$ and the expectation integrates over the 
randomness of the 
transitions. Having this definition in place, we can specify the best stationary policy\footnote{The existence of the minimum is warranted 
by the fact that $\Loss_T$ is a continuous function bounded from below on its compact domain.} $Q^*_T = \argmin_{Q} \Loss_T(Q)$ and define 
our 
performance measure as the (total expected) \emph{regret} against $Q^*_T$:
\[
 R_T = \EE{\sum_{t=1}^T \ell_t(X_t,Q_t)} - \Loss_T(Q^*_T),
\]
where the expectation integrates over both the randomness of the state transitions and the potential randomization used by the 
learning algorithm. Having access to this definition, we can now formally define the goal of the learner as having to come up 
with a sequence of policies $Q_1,Q_2,...$ that guarantee that the total regret grows sublinearly, that is, that the average per-round 
regret asymptotically converges to zero.

For our analysis, it will be useful to define an idealized version of the above online optimization problem, where the learner is 
allowed to \emph{immediately switch} between the stationary distributions of the chosen policies. By making use of the convex-optimization 
view given in Section~\ref{sec:convex}, we define an auxiliary online convex optimization (or, in short, OCO, see, e.g., 
\citealp{Haz11,SS12}) problem called the 
\emph{idealized OCO problem} where in each round $t$, the following steps are repeated:
\begin{enumerate}
 \item The learner chooses the stationary transition measure $\pi_t\in\Delta(M)$.
 \item Obliviously to the learner's choice, the environment chooses the loss function $\tloss_t = f(\cdot;c_t)$.
 \item The learner incurs a loss of $\tloss_t(\pi_t)$. 
 \item The environment reveals the loss function $\tloss_t$.
\end{enumerate}
The performance of the learner 
in this setting is measured by the \emph{idealized regret}
\[
 \overline{R}_T = \sum_{t=1}^T \tloss_t(\pi_t) - \min_{\pi\in\Delta(M)}\sum_{t=1}^T \tloss_t(\pi).
\]

Throughout the paper, we will consider \emph{oblivious environments} that choose the sequence of state-cost functions without taking into 
account the states visited by the learner. This assumption will enable us to simultaneously reason about the expected costs under any 
sequence of state distributions, and thus to make a connection between the idealized regret $\overline{R}_T$ and the true regret $R_T$. 
This technique was first used by \citet{even-dar09OnlineMDP} and was shown to be essentially inevitable by \citet{yu09ArbitraryRewards}: As 
discussed in their Section~3.1, no learning algorithm can avoid linear regret if the environment is not oblivious.

\section{Algorithm and main result}\label{sec:ftl}
In this section, we propose a simple algorithm for online learning in LMDPs based on the ``follow-the-leader'' (FTL) strategy. On a high 
level, the idea of this algorithm is greedily betting on the policy that seems to have been optimal for the total costs observed so 
far. While this strategy is known to fail catastrophically in several simple learning problems (see, e.g., \citealt{CBLu06:Book}), it is 
known to perform well in several important scenarios such as sequential prediction under the logarithmic loss \citep{MF92} or prediction 
with expert advice under bounded losses, given that losses are stationary \citep{Kot16} and often serves as a strong benchmark 
strategy \citep{REGK14,SNL14}. In our learning problem, following the leader is a very natural choice of algorithm, as the convex 
formulation of Section~\ref{sec:convex} suggests that we can effectively build on the analysis of Follow-the-Regularized-Leader-type 
algorithms without having to explicitly regularize the objective.

In precise terms, our algorithm computes the sequence of policies $Q_1,Q_2,\dots,Q_T$ by running FTL \emph{in the idealized setting}: in 
round $t$, the algorithm chooses the stationary transition measure 
\[
\begin{split}
 \pi_t &= \argmin_{\pi\in\Delta(M)} \sum_{s=1}^{t-1} \tloss_s(\pi) = \argmin_{\pi\in\Delta(M)} \sum_{s=1}^{t-1} f(\pi;c_s)
 \\
 &= \argmin_{\pi\in\Delta(M)} (t-1)\cdot f\pa{\pi;\frac{1}{t-1}\sum_{s=1}^{t-1}c_s} = \argmin_{\pi\in\Delta(M)} 
f\pa{\pi;\bc_t},
\end{split}
\]
where the third equality uses the fact that $f$ is affine in its second argument and the last step introduces the average state-cost 
function $\bc_t = \frac {1}{t-1} \sum_{s=1}^{t-1} c_s$. This form implies that $\pi_t$ can be computed as the optimal control for the 
state-cost function $\bc_t$, which can be done by following the procedure described in Section~\ref{sec:bellman}. Precisely, we define the  
diagonal matrix $G_t$ with its $i$\th~diagonal element $e^{-\bc_t(i)}$, let $\gamma_t$ be the largest eigenvalue of $G_tP$ and $z_t$ be the 
corresponding (unit-norm) right eigenvector. Also, let $v_t = -\log z_t$ and $\lambda_t = -\log\gamma_t$, and note that $\lambda_t = 
f(\pi_t;\bc_t)$ is the optimal average-cost-per-stage of $\pi_t$ given the cost function $\bc_t$.
Finally, we define the policy used in round $t$ as
\begin{equation}\label{eq:optQ}
 Q_t(x'|x) = \frac{P(x'|x) z_t(x')}{\sum_y P(y|x) z_t(y)}
\end{equation}
for all $x'$ and $x$. We denote the induced stationary distribution by $\mu_t$. The algorithm is presented as Algorithm~\ref{alg:ftl}.

\begin{algorithm}
 \textbf{Input:} Passive dynamics $P$.
 \\\textbf{Initialization:} $\bc_1(x) = 0$ for all $x\in\X$.
 \\\textbf{For $t=1,2,\dots,T$, repeat}
 \begin{enumerate}[leftmargin=.7cm]
  \item Construct $G_t = \left[\mbox{diag}(e^{-\bc_t})\right]$.
  \item Find the right eigenvector $z_t$ of $G_t P$ corresponding to the largest eigenvalue.
  \item Compute the policy
  \[
 Q_t(x'|x) = \frac{P(x'|x) z_t(x')}{\sum_y P(y|x) z_t(y)}.
\]
  \item Observe state $X_{t}$ and draw $X_{t+1}\sim Q_t(\cdot|X_{t})$.
  \item Observe state-cost function $c_t$ and update $\bc_{t+1} = \frac{\pa{t-1}\bc_t + c_t}{t}$.
 \end{enumerate}
  \caption{Follow The Leader in LMDPs} \label{alg:ftl}
\end{algorithm}

Now we present our main result. First, we state two key assumptions about the underlying passive dynamics; both of these assumptions are  
also made by \citet{GRW12}.
\begin{assumption}\label{ass:irred}
 The passive dynamics $P$ is irreducible and aperiodic. In particular, there exists a natural number $H>0$ such that $\pa{P^n}(y|x)>0$ for 
all $n\ge H$ and all $x,y\in\X$. We will refer to $H$ as the (worst-case) \emph{hitting time}.
\end{assumption}
\begin{assumption}\label{ass:ergod}
 The passive dynamics $P$ is ergodic in the sense that its Markov--Dobrushin ergodicity coefficient is strictly less than $1$:
 \[
  \alpha(P) = \max_{x,y\in\X} \onenorm{P(\cdot|x)-P(\cdot|y)} < 1.
 \]
\end{assumption}
A standard consequence (see, e.g., \citealt{Sen2006}) of Assumption~\ref{ass:ergod} is that the passive dynamics mixes quickly: for any 
distributions $\mu,\mu'\in\Delta(\X)$, we have
\[
 \onenorm{\pa{\mu-\mu'}\transpose P}\le \alpha(P)\onenorm{\mu-\mu'}.
\]
We will sometimes refer to $\tau(P) = \pa{\log\pa{1/\alpha(P)}}^{-1}$ as the \emph{mixing time} associated with $P$. 
Now we are ready to state our main result:
\begin{theorem}\label{thm:main}
 Suppose that the passive dynamics satisfies Assumptions~\ref{ass:irred} and~\ref{ass:ergod}. Then, the regret of Algorithm~\ref{alg:ftl} 
satisfies $R_T = O(\log^2 T)$.
\end{theorem}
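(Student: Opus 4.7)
The plan is to establish the bound via a decomposition of $R_T$ into the \emph{idealized regret} $\ol{R}_T$ associated with the OCO problem from Section~\ref{sec:online}, plus a \emph{mixing penalty} $R_T - \ol{R}_T$ that accounts for the fact that the actual distribution of $X_t$ under Algorithm~\ref{alg:ftl} need not equal the stationary distribution $\mu_t$ of $Q_t$. This is natural because Algorithm~\ref{alg:ftl} is exactly FTL applied to the idealized problem, and because the convex formulation of Section~\ref{sec:convex} makes the idealized regret directly amenable to standard OCO tools.

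For the idealized regret, I would invoke the classical be-the-leader inequality, which via a short induction using the defining optimality of $\pi_{t+1}$ yields $\ol{R}_T \le \sum_{t=1}^T \bigl(\tloss_t(\pi_t) - \tloss_t(\pi_{t+1})\bigr)$. I would then control each term by the stability of the FTL iterates: since $\pi_t = \argmin_{\pi\in\Delta(M)} f(\pi;\bc_t)$ with $\bc_{t+1} - \bc_t = (c_t - \bc_t)/t$ of sup-norm $O(1/t)$, and since $f(\cdot;c)$ is strongly convex in $\pi$ (inherited from the negative conditional entropy component, on any region where $\pi$ is bounded away from the boundary of $\Delta(M)$), a standard sensitivity argument for strongly-convex minimizers would give $\onenorm{\pi_t - \pi_{t+1}} = O(1/t)$. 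Combining this with a Lipschitz bound for $\tloss_t$ at $\pi_t$---whose constant may include a logarithmic factor from the gradient of the entropy terms---and summing over $t$ yields $\ol{R}_T = O(\log^2 T)$.

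For the mixing penalty, the per-round gap $\EE{\loss_t(X_t,Q_t)} - \tloss_t(\pi_t)$ is an integral of a bounded per-step loss against $\mu_t' - \mu_t$, where $\mu_t'$ is the true distribution of $X_t$ under Algorithm~\ref{alg:ftl}. Since Assumption~\ref{ass:ergod} together with the boundedness of $z_t$ implies that each $Q_t$ inherits an ergodicity coefficient $\alpha_* < 1$ from $P$, the telescoping identity $\mu_{t+1}' - \mu_{t+1} = (\mu_t' - \mu_t)\transpose Q_{t+1} + (\mu_t - \mu_{t+1})\transpose Q_{t+1}$ yields the recursion $\onenorm{\mu_{t+1}' - \mu_{t+1}} \le \alpha_* \onenorm{\mu_t' - \mu_t} + \onenorm{\mu_t - \mu_{t+1}}$. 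Using the already-established policy-level stability (which also gives $\onenorm{\mu_t - \mu_{t+1}} = O(1/t)$) together with a geometric-series argument bounds $\sum_t \onenorm{\mu_t' - \mu_t} = O(\log T)$; a similar argument handles the burn-in of the comparator $Q^*_T$, so the total mixing penalty contributes only $O(\log T)$, absorbed into the final $O(\log^2 T)$.

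The main technical obstacle is rigorously establishing the per-round stability $\onenorm{\pi_t - \pi_{t+1}} = O(1/t)$. Because $\pi_t$ is built from the Perron eigenvector $z_t$ of $G_t P$, one needs a careful perturbation analysis of this top eigenvector under changes to $\bc_t$, with the rate governed by the spectral gap of $G_t P$. The gap should be lower-bounded uniformly in $t$ using Assumptions~\ref{ass:irred} and~\ref{ass:ergod} together with $\bc_t \in [0,1]^{|\X|}$, but tracking this quantitatively through the composition $\bc_t \to z_t \to Q_t \to \pi_t = \mu_t Q_t$ is delicate. A closely related obstacle is showing that $\pi_t$ remains uniformly bounded away from the boundary of $\Delta(M)$, which underlies both the strong convexity and the Lipschitz bounds invoked above; this should follow from irreducibility of $P$ and the boundedness of $\bc_t$, but quantifying the resulting lower bound is a careful technical exercise that hides in the constants of the final bound.
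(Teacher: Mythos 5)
Your high-level architecture (idealized OCO regret via be-the-leader plus a mixing penalty relating the true state distribution to $\mu_t$, with a constant burn-in term for the comparator) is the same as the paper's. The genuine gap is at the central step: the per-round stability $\onenorm{\pi_t-\pi_{t+1}}=O(1/t)$ (equivalently $\max_x\onenorm{Q_t(\cdot|x)-Q_{t+1}(\cdot|x)}=O(1/t)$), which you correctly identify as the crux but do not prove, and neither of your two proposed routes to it is adequate as stated. Strong convexity of $f(\cdot;c)$ on $\Delta(M)$ with respect to $\onenorm{\cdot}$ is not available off the shelf: the only curvature in $f$ comes from the negative conditional entropy, whose Bregman divergence is $\sum_x\mu'(x)\,\kl{Q'(\cdot|x)}{Q(\cdot|x)}$ (Appendix~\ref{app:convexity}), so it degenerates in directions that perturb the marginal while nearly fixing the conditionals; converting it into a quantitative modulus in $\onenorm{\pi-\pi'}$ on $\Delta(M)$ already requires a stationary-distribution perturbation bound of exactly the kind you are trying to establish, plus uniform lower bounds on $\mu_t$. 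Indeed the paper explicitly states it could not even establish exp-concavity of $f$ and that its analysis uses no curvature of $f$ at all. Your alternative route, perturbation of the Perron eigenvector $z_t$ of $G_tP$ governed by a spectral gap, needs a uniform-in-$t$ lower bound on that gap, which you do not supply and which Assumptions~\ref{ass:irred}--\ref{ass:ergod} do not hand you directly. The paper resolves the crux by a different and more elementary argument: under Assumption~\ref{ass:ergod} the Bellman operator contracts in the span seminorm, giving $\spannorm{v_f-v_g}\le 2\tau\infnorm{f-g}$ (Lemma~\ref{lem:c_to_v}), and Hoeffding's lemma plus Pinsker's inequality convert value-function differences into policy differences (Lemma~\ref{lem:v_to_Q}); combined with $\infnorm{\bc_{t+1}-\bc_t}\le 1/t$ this yields Lemma~\ref{lem:change}. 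Without this (or an equivalent) quantitative stability proof, both your bound on the idealized regret and your bound on $\onenorm{\mu_t-\mu_{t+1}}$ are unsupported.

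A secondary issue: after be-the-leader you bound $\tloss_t(\pi_t)-\tloss_t(\pi_{t+1})$ by a Lipschitz constant times $\onenorm{\pi_t-\pi_{t+1}}$, but the gradient of $\tloss_t$ blows up at the boundary of $\Delta(M)$, so you would additionally need uniform lower bounds on the iterates' entries over the support of $P$; the paper flags exactly this (``controlling the change rate is not sufficient by itself, as our loss functions are effectively unbounded'') and avoids it by exact cancellation, using the closed form of $Q_t$ to write $\tloss_t(\pi_t)=\sum_x\mu_t(x)\pa{c_t(x)+\lambda_t-\bc_t(x)}$, which reduces the difference to differences of stationary distributions and optimal average costs, each controlled via the mixing time. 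This is patchable but is a second place where your constants are not pinned down. (Your bookkeeping of the logarithms also differs from the paper's: there the idealized regret is $O(\log T)$ and the $\log^2 T$ arises from the mixing penalty, whereas you put the extra logarithm in the idealized part; your claim that the mixing penalty is only $O(\log T)$ is plausible with a sharper convolution estimate, so that part is a difference of accounting rather than an error.)
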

The asymptotic notation used in the theorem hides a number of factors that depend only on the 
passive dynamics $P$. In particular, the bound scales polynomially with the worst-case mixing time $\tau$ of 
any optimal policy, and shows no \emph{explicit} dependence on the number of states.\footnote{Of course, the mixing time time does depend 
on the size of the state space in general.} We explicitly state the bound at the end of the proof 
presented in the next section as Equation~\eqref{eq:fullbound}, when all terms are formally defined.

\section{Analysis}\label{sec:analysis}
In this section, we provide a series of lemmas paving the way towards proving Theorem~\ref{thm:main}. The attentive reader may find some of 
these lemmas familiar from related work: indeed, we build on several technical results from \citet{even-dar09OnlineMDP,neu14o-mdp-full} and 
\citet{GRW12}. Our main technical contribution is an efficient combination of these tools that enables us to go way beyond the 
best known bounds for our problem, proved by \citet{GRW12}. Throughout the section, we will assume that the conditions of 
Theorem~\ref{thm:main} are satisfied.
 
Before diving into the analysis, we state some technical results that we will use several times. We defer all proofs to 
Appendix~\ref{sec:app}. First, we present some important facts regarding LMDPs with bounded state-costs. In particular, we define $Q^*(c)$ 
as the optimal policy with respect to an arbitrary state-cost function $c$ and let $\mathcal{C}$ be the set of all state-costs bounded 
in $[0,1]$. We define $\mathcal{Q}^*$ as the set of optimal policies induced by state-cost functions in $\mathcal{C}$: $\mathcal{Q}^* = 
Q^*(\mathcal{C})$. Observe that $Q_t\in\mathcal{Q}^*$ for all $t$, as $Q_t = Q^*(\bc_t)$ and $\bc_t \in \mathcal{C}$ for all $t$. Below, we 
give several useful results concerning policies in $\Qstar$.
For stating these results, let $c\in\mathcal{C}$ and $Q = Q^*(c)$. We first note that the average cost $\lambda$ of $Q$ is bounded in 
$[0,1]$: By the Perron-Frobenius theorem (see, e.g., \citealp[Chapter~8]{Mey00}), we have that the largest eigenvalue of $GP$ is 
bounded by the maximal and minimal row sums of $GP$: $e^{-\lambda}\in[e^{-\max_x c(x)},e^{-c(x)}]$, which translates to 
having $\lambda\in[0,1]$ under our assumptions. The next key result bounds the value functions and the control costs in terms of the 
hitting time:
\begin{lemma}\label{lem:vbound}
 For all $x,y$ and $t$, the value functions satisfy $v_t(x)-v_t(y) \le H$. Furthermore, all policies $Q\in\Qstar$ satisfy
\[
 \max_x \kl{Q(\cdot|x)}{P(\cdot|x)} \le H+1.
\]
\end{lemma}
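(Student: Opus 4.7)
\begin{proofsketch}
I would derive the span bound on $v_t$ first, then deduce the KL bound as a corollary via algebraic manipulation of the Bellman equation. The core idea for the span bound is to compare $v_t(x)$ to the value of a sub-optimal scheme that first steers the process from $x$ to $y$ using the passive dynamics, and only afterward acts optimally.

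Concretely, let $\tau$ denote the first passage time of state $y$ when the process evolves under the passive dynamics starting from $x$. Applying the one-step Bellman inequality
\begin{equation*}
 v_t(x) \le c(x) - \lambda_t + \kl{\pi(\cdot|x)}{P(\cdot|x)} + \sum_{x'} \pi(x'|x) v_t(x'),
\end{equation*}
valid for any feasible kernel $\pi$, with $\pi = P(\cdot|x)$ so that the KL term vanishes, and iterating via optional stopping along the passive trajectory up to $\tau$ yields
\begin{equation*}
 v_t(x) \le \EEc{\sum_{s=0}^{\tau-1}\bpa{c(X_s) - \lambda_t} + v_t(X_\tau)}{X_0 = x}.
\end{equation*}
Since $X_\tau = y$ by definition and $c(X_s) - \lambda_t \le 1$ (using $c \in [0,1]$ and $\lambda_t \ge 0$), this simplifies to $v_t(x) - v_t(y) \le \EEc{\tau}{X_0 = x}$. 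Assumption~\ref{ass:irred} guarantees $\epsilon_H := \min_{x',y'} P^H(y'|x') > 0$, and a standard geometric-trial argument over blocks of length $H$ bounds the expected hitting time uniformly; the symbol $H$ in the lemma should be understood as any such uniform bound on the expected passive hitting time (in a fully rigorous reading, $H/\epsilon_H$, which is the kind of $P$-dependent constant that the main theorem agrees to absorb).

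For the KL bound, I would substitute the closed-form optimal policy $Q(x'|x) = P(x'|x) e^{-v(x')}/\sum_y P(y|x) e^{-v(y)}$ into $\kl{Q(\cdot|x)}{P(\cdot|x)}$ and use the exponentiated Bellman equation $\sum_y P(y|x) e^{-v(y)} = e^{\lambda-c(x)-v(x)}$ to eliminate the normalizer. A short calculation yields the identity
\begin{equation*}
 \kl{Q(\cdot|x)}{P(\cdot|x)} = v(x) - \sum_{x'} Q(x'|x) v(x') - c(x) + \lambda.
\end{equation*}
The first difference is at most $v(x) - \min_{x'} v(x') \le H$ by the span bound just proved, while $-c(x) + \lambda \le \lambda \le 1$ using the range $\lambda \in [0,1]$ established immediately before the lemma. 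Summing the two contributions yields $\kl{Q(\cdot|x)}{P(\cdot|x)} \le H + 1$. The main technical obstacle is the step of translating the qualitative aperiodicity guarantee of Assumption~\ref{ass:irred} into a quantitative bound on the expected hitting time; all remaining steps are direct algebraic manipulations of the Bellman equation.
\end{proofsketch}
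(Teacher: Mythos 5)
Your proposal is correct in substance and rests on the same core idea as the paper's proof: compare $v(x)$ with the suboptimal scheme that runs the passive dynamics from $x$ until it hits $y$ and only then behaves optimally, and then obtain the KL bound from the closed-form identity $\kl{Q(\cdot|x)}{P(\cdot|x)} = v(x)+\lambda-c(x)-\sum_{x'}Q(x'|x)v(x')$, which is exactly the paper's second step. Where you differ is in how the comparison is formalized: the paper invokes convergence of value iteration (via Proposition~4.3.2 of Bertsekas), writes the finite-horizon costs $J_n$, and uses $J_n(x)\le \EE{L+J_{n-L}(y)}$ together with monotonicity $J_k(y)\le J_n(y)$; you instead work directly with the average-cost Bellman inequality for the feasible kernel $P(\cdot|x)$ and stop the resulting submartingale at the first passage time, which is a clean and equally valid route (the optional-stopping step is justified since the state space is finite, $v$ is bounded, and the expected passive hitting time is finite). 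The one genuine point of divergence is the constant: the lemma's $H$ is defined in Assumption~\ref{ass:irred} only through positivity of $P^n(y|x)$ for $n\ge H$, and your geometric-blocks argument honestly yields a bound of order $H/\epsilon_H$ on the expected hitting time rather than $H$ itself; note, however, that the paper's own proof makes the same leap by asserting $\EE{L}\le H$ directly from Assumption~\ref{ass:irred}, so your version is if anything more careful about a gap that is already present, at the price of a possibly larger (still purely $P$-dependent) constant that is harmless for Theorem~\ref{thm:main}. Your treatment of the second statement, bounding $v(x)-\sum_{x'}Q(x'|x)v(x')$ by the span and $\lambda-c(x)$ by $1$, coincides with the paper's.
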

The proof is loosely based on ideas from \citet{bartlett09regal}.
The second statement guarantees that the mixing time $\tau(Q) = \pa{\log(1/\alpha(Q))}^{-1}$ is finite for all policies in $\Qstar$:
\begin{lemma}\label{lem:mixing}
 The Markov--Dobrushin coefficient $\alpha(Q)$ of any policy $Q\in\Qstar$ is bounded as
 \[
  \alpha(Q) \le \alpha(P) + \pa{1-\alpha(P)} \pa{1-e^{-H-2}}<1.
 \]
\end{lemma}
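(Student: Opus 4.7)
The plan is to exploit the explicit formula~\eqref{eq:optQ} for the optimal policy together with the bounds on the exponentiated value function $z=e^{-v}$ supplied by Lemma~\ref{lem:vbound}, so as to establish a pointwise minorization of the form $Q(\cdot|x)\ge\beta\,P(\cdot|x)$ for an exponentially small constant $\beta$, and then transfer the one-step ergodicity coefficient of $P$ to $Q$ through this minorization.

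First I would use the Bellman eigenvalue identity~\eqref{eq:bellman_Z}, rewritten as $\sum_y P(y|x) z(y) = e^{c(x)-\lambda}\,z(x)$, to obtain the identity
\[
\frac{Q(w|x)}{P(w|x)} \;=\; \frac{z(w)}{\sum_y P(y|x) z(y)} \;=\; e^{\lambda - c(x)}\,\frac{z(w)}{z(x)}.
\]
Combining $\lambda,c(x)\in[0,1]$ with the bound $z(w)/z(x)=e^{v(x)-v(w)}\ge e^{-H}$ coming from Lemma~\ref{lem:vbound}, this ratio is bounded below by $e^{-H-2}$, so that $Q(w|x)\ge e^{-H-2}\,P(w|x)$ holds pointwise. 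I would then decompose $Q(\cdot|x)=e^{-H-2}\,P(\cdot|x)+(1-e^{-H-2})\,R(\cdot|x)$ for a residual stochastic kernel $R$ and apply the triangle inequality to obtain
\[
\onenorm{Q(\cdot|x)-Q(\cdot|y)} \;\le\; e^{-H-2}\,\onenorm{P(\cdot|x)-P(\cdot|y)} + (1-e^{-H-2})\,\onenorm{R(\cdot|x)-R(\cdot|y)}.
\]
Maximizing over $x,y$ already controls the first term by $e^{-H-2}\alpha(P)$.

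The main obstacle is to bound the residual term by $1$: indeed, if $\onenorm{R(\cdot|x)-R(\cdot|y)}\le 1$ for every $x,y$, the two contributions combine to $e^{-H-2}\alpha(P)+(1-e^{-H-2})=\alpha(P)+(1-\alpha(P))(1-e^{-H-2})$, matching the claim. The crude estimate $\le 2$ is not enough. To sharpen it I would refine the decomposition by first extracting the common component $\nu_{xy}(w)=P(w|x)\wedge P(w|y)$ of the two rows of $P$---which has total mass at least $1-\alpha(P)/2>1/2$ by Assumption~\ref{ass:ergod}---and then tracking the normalization gap $|1/N(x)-1/N(y)|$ using the Bellman relation $N(x)=e^{c(x)-\lambda}z(x)$. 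This is the step where Assumption~\ref{ass:ergod} enters in a non-trivial way and where the factor $(1-\alpha(P))$ in the stated bound should ultimately appear.
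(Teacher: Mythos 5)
Your opening step is exactly the paper's key step: using the eigenvector form of $Q$ together with the Bellman identity $\sum_y P(y|x)z(y)=e^{c(x)-\lambda}z(x)$, the bounds $\lambda,c(x)\in[0,1]$, and the span bound of Lemma~\ref{lem:vbound}, to get the pointwise minorization $Q(w|x)\ge e^{-H-2}P(w|x)$ (the paper's computation in fact gives the slightly better constant $e^{-H-1}$). Where your write-up is not complete is the finish. The obstacle you flag --- needing $\onenorm{R(\cdot|x)-R(\cdot|y)}\le 1$ while only the trivial bound $2$ is available --- is an artifact of reading $\alpha$ as the unnormalized $\ell_1$ diameter in Assumption~\ref{ass:ergod}; the coefficient the analysis actually uses (and the only normalization for which the contraction property quoted after Assumption~\ref{ass:ergod} holds) is the standard Markov--Dobrushin coefficient, characterized by $\alpha(P)=1-\min_{x,y}\sum_s \min\ev{P(s|x),P(s|y)}$, i.e.\ half of the $\ell_1$ diameter. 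With that characterization your minorization closes the proof in one line, which is precisely what the paper does: $\sum_s\min\ev{Q(s|x),Q(s|y)}\ge e^{-H-2}\sum_s\min\ev{P(s|x),P(s|y)}\ge e^{-H-2}\pa{1-\alpha(P)}$, hence $\alpha(Q)\le 1-e^{-H-2}\pa{1-\alpha(P)}=\alpha(P)+\pa{1-\alpha(P)}\pa{1-e^{-H-2}}$. Equivalently, in your mixture decomposition the residual term is automatically at most $1$ in total variation (half-$\ell_1$), so the ``crude estimate'' is in fact sufficient and no refinement is needed.

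The part of your plan that is genuinely missing is the proposed sharpening: extracting the common component $\nu_{xy}$ and tracking the normalization gaps $|1/N(x)-1/N(y)|$ is a hope rather than an argument --- you do not show how it would produce the factor $\pa{1-\alpha(P)}$, and under the literal unnormalized reading of Assumption~\ref{ass:ergod} the inequality you would need does not follow from the minorization at all (it only yields a bound of the form $2-2e^{-H-2}+e^{-H-2}\alpha(P)$). So, as written, the proof is incomplete; but once you replace the triangle-inequality step by the overlap characterization of $\alpha$, your minorization --- identical in substance to the paper's --- finishes the lemma immediately, and Assumption~\ref{ass:ergod} enters only through $\min_{x,y}\sum_s\min\ev{P(s|x),P(s|y)}=1-\alpha(P)>0$.
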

The proof builds on the previous lemma and uses standard ideas from Markov-chain theory.
In what follows, we will use $\tau = \max_{Q\in\Qstar} \tau(Q)$ and $\alpha = \max_{Q\in\Qstar} \alpha(Q)$ to denote the worst-case 
mixing time and ergodicity coefficient, respectively. With this notation, we can state the following lemma that establishes that the value 
functions are $2\tau$-Lipschitz with respect to the state-cost function. For pronouncing and proving the statement, it is useful to define 
the \emph{span seminorm} $\spannorm{c} = \max_x c(x) - \min_y c(y)$. Note 
that it is easy to show that $\spannorm{\cdot}$ is indeed a seminorm as it satisfies all the requirements to be a norm except that it maps 
all constant vectors (and not just zero) to zero. 
\begin{lemma}\label{lem:c_to_v}
 Let $f$ and $g$ be two state-cost functions taking values in the interval $[0,1]$ and let $v_f$ and $v_g$ be the corresponding optimal 
value 
functions. Then,
\[
 \spannorm{v_f - v_g} \le 2\tau \infnorm{f-g}.
\]
\end{lemma}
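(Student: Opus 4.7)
The plan is to bound $\spannorm{v_f - v_g}$ by differentiating the value function along the linear interpolation $c_t = g + t(f-g)$ for $t \in [0,1]$ and controlling the derivative via the Poisson equation for the instantaneous optimal policy. The infinitesimal viewpoint is cleaner than a direct comparison because at each $t$ it involves only a single policy $Q_t = Q^*(c_t)$, sidestepping the more delicate two-policy arguments (involving both $Q_f$ and $Q_g$, with potentially different stationary distributions) that a direct Bellman-comparison argument would entail.

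\textbf{Smoothness and Poisson equation.} By Perron--Frobenius theory (applied to the strictly positive matrix $G_{c_t} P$, whose top eigenvalue is simple), the value function $v_t = v_{c_t}$, average cost $\lambda_t$, and optimal policy $Q_t$ all depend smoothly on $t$, so $v_f - v_g = \int_0^1 v'_t\,dt$. Differentiating the Bellman equation $v_t(x) = c_t(x) - \lambda_t - \log \sum_{x'} P(x'|x) e^{-v_t(x')}$ in $t$ and noting that the derivative of the log-sum-exp is the linear operator $h \mapsto \sum_{x'} Q_t(x'|x) h(x')$ yields $v'_t(x) = \Delta(x) - \lambda'_t + \sum_{x'} Q_t(x'|x) v'_t(x')$, where $\Delta = f - g$. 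Taking $\mu_t^\top$ of both sides (with $\mu_t$ the stationary distribution of $Q_t$) and using $\mu_t^\top(I - Q_t) = 0$ identifies $\lambda'_t = \mu_t^\top \Delta$, so $v'_t$ solves the Poisson equation $(I - Q_t) v'_t = \Delta - (\mu_t^\top \Delta)\one$, whose right-hand side has $\mu_t$-mean zero.

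\textbf{Span bound via mixing.} By Lemma~\ref{lem:mixing}, $\alpha(Q_t) \le \alpha < 1$ uniformly in $t$ (since $c_t \in \mathcal{C}$ implies $Q_t \in \Qstar$). Writing the Poisson solution as $v'_t = \sum_{k=0}^\infty Q_t^k \bpa{\Delta - (\mu_t^\top \Delta)\one}$ up to an additive constant (which drops out of the span) and applying the Dobrushin span-contraction $\spannorm{Q_t^k h} \le \alpha^k \spannorm{h}$ term-by-term gives $\spannorm{v'_t} \le \spannorm{\Delta}/(1-\alpha)$. Integrating over $t$ and using $\spannorm{\Delta} \le 2\infnorm{\Delta}$,
\[
 \spannorm{v_f - v_g} \le \int_0^1 \spannorm{v'_t}\,dt \le \frac{2\infnorm{f-g}}{1-\alpha}.
\]
The standard identification $1/(1-\alpha) = \Theta(\tau)$ (from $\tau = 1/\log(1/\alpha)$, with the ratio tending to $1$ as $\alpha \to 1$) then yields the claimed $\spannorm{v_f - v_g} \le 2\tau\infnorm{f-g}$.

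\textbf{Main obstacle.} The most delicate ingredient is the smoothness of $t \mapsto v_t$, which reduces to the observation that the top eigenvector of the strictly positive matrix $G_{c_t} P$ (simple by Perron--Frobenius) is analytic in its entries, so $v_t = -\log z_t$ inherits smoothness; this is what allows the envelope-type argument that computes the derivative of the log-sum-exp in terms of the current optimal policy $Q_t$. After that, the derivation is a clean combination of the Poisson equation analysis with Lemma~\ref{lem:mixing}, with the only quantitative subtlety being the passage from the geometric-series bound $1/(1-\alpha)$ to $\tau$.
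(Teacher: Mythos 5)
Your argument is correct, but it follows a genuinely different route from the paper. The paper's proof is a fixed-point contraction argument: writing the Bellman operator $B_c$ and splitting $\spannorm{v_f-v_g}\le\spannorm{B_f v_f-B_g v_f}+\spannorm{B_g v_f-B_g v_g}$, it bounds the first term by $\spannorm{f-g}$ and shows the second is at most $\alpha\spannorm{v_f-v_g}$ via a \emph{cross-policy} overlap bound on $\frac12\onenorm{Q_f(\cdot|x)-Q_g(\cdot|y)}$ (essentially re-running the argument of Lemma~\ref{lem:mixing} for two different policies at two different states), yielding $\spannorm{v_f-v_g}\le\frac{1}{1-\alpha}\spannorm{f-g}$. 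You instead interpolate $c_t=(1-t)g+tf$, invoke Perron--Frobenius perturbation theory for smoothness, differentiate the Bellman equation to get a Poisson equation $(I-Q_t)v'_t=\Delta-(\mu_t\transpose\Delta)\one$, and sum the Neumann series with the single-policy contraction $\alpha(Q_t)\le\alpha$ from Lemma~\ref{lem:mixing}; integrating gives the same $\frac{2\infnorm{f-g}}{1-\alpha}$. What your route buys is that it only ever compares a policy to itself (so Lemma~\ref{lem:mixing} is used off the shelf and no two-policy Dobrushin bound is needed); what it costs is the extra analytic machinery (simplicity of the top eigenvalue and analyticity of the eigenvector, well-posedness of the Poisson equation via irreducibility of $Q_t$), which the paper's elementary contraction argument avoids. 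Two small caveats, neither fatal: $G_{c_t}P$ is not strictly positive in general, only primitive (same zero pattern as $P$, with $P^n>0$ for $n\ge H$), which still suffices for a simple top eigenvalue and smooth dependence; and your final passage from $\frac{1}{1-\alpha}$ to $\tau$ is not a true inequality (in fact $\tau\le\frac{1}{1-\alpha}\le\tau+1$), but the paper commits exactly the same imprecision when it asserts $\frac{1}{1-\alpha}\le\tau$, and replacing $\tau$ by $\tau+1$ changes nothing downstream in Theorem~\ref{thm:main}.
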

 The proof roughly follows the proof of Proposition~3 of \citet{GRW12}, with the slight difference that we make the constant factor in the 
bound explicit. A consequence of this result is our final key lemma in this section that actually makes our fast rates possible: a bound on 
the change-rate of the policies chosen by the algorithm.
\begin{lemma}\label{lem:change} 
$\max_x \onenorm{Q_t(\cdot|x) - Q_{t+1}(\cdot|x)} \le \frac{\tau}{t}$.
\end{lemma}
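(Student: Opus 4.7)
The plan is to propagate the small per-round change in the averaged cost $\bc_t$ through the optimal value function (via Lemma~\ref{lem:c_to_v}) and then through the exponential-softmax-like map $v_t \mapsto Q_t$ defining the policy. The argument ultimately reduces to a first-order sensitivity bound for a log-sum-exp normalization under a span-bounded perturbation of its arguments.

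The first step is to write out the per-round update of the cost average explicitly: from $\bc_{t+1} = \bc_t + (c_t - \bc_t)/t$ and the fact that both $c_t$ and $\bc_t$ take values in $[0,1]$, one immediately obtains $\infnorm{\bc_{t+1} - \bc_t} \le 1/t$. Applying Lemma~\ref{lem:c_to_v} then yields $\spannorm{v_{t+1} - v_t} \le 2\tau/t$.

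Next, since $Q_t(x'|x) = P(x'|x) e^{-v_t(x')} / \sum_y P(y|x) e^{-v_t(y)}$ is invariant under additive constant shifts of $v_t$, I may assume without loss of generality that $\delta := v_{t+1} - v_t$ takes values in $[0,s]$ for $s := \spannorm{v_{t+1} - v_t} \le 2\tau/t$. Writing $r(y) := e^{-\delta(y)}$ and $M(x) := \sum_y Q_t(y|x) r(y)$, an elementary manipulation (cancel $P(x'|x)$ in numerator and denominator, then factor $e^{-v_t}$) gives
\[
Q_{t+1}(x'|x) - Q_t(x'|x) = Q_t(x'|x) \cdot \frac{r(x') - M(x)}{M(x)},
\]
so that the target quantity becomes
\[
\onenorm{Q_{t+1}(\cdot|x) - Q_t(\cdot|x)} = \frac{1}{M(x)} \sum_{x'} Q_t(x'|x) \, |r(x') - M(x)|.
\]

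Finally, since $r(y) \in [e^{-s}, 1]$ pointwise under $Q_t(\cdot|x)$ and $M(x) \ge e^{-s}$, I would combine Popoviciu's variance bound $\sum_y Q_t(y|x)(r(y) - M(x))^2 \le (1-e^{-s})^2/4$ with Jensen's inequality to bound the mean absolute deviation, and then use the elementary inequality $1 - e^{-s} \le s$ to produce a bound linear in $s$. Plugging in $s \le 2\tau/t$ yields the claim up to a universal multiplicative constant. The main obstacle I anticipate is sharpening this last step so that the resulting constant matches the stated $\tau/t$ cleanly: the Popoviciu--Jensen route naturally produces a bound of the form $c\cdot \tau/t$, and obtaining the stated constant probably requires a direct first-order Taylor expansion $e^{-\delta(y)} = 1 - \delta(y) + O(s\,\delta(y))$ together with careful handling of the lower-order remainder, rather than the loose Popoviciu estimate.
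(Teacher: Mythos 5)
Your first half coincides with the paper's argument: $\infnorm{\bc_{t+1}-\bc_t}\le 1/t$ combined with Lemma~\ref{lem:c_to_v} gives $\spannorm{v_{t+1}-v_t}\le 2\tau/t$, and your ratio identity $Q_{t+1}(x'|x)-Q_t(x'|x)=Q_t(x'|x)\bigl(r(x')-M(x)\bigr)/M(x)$ is correct. The gap is in the last step, and you essentially flag it yourself: the Popoviciu--Jensen estimate gives
\[
\onenorm{Q_{t+1}(\cdot|x)-Q_t(\cdot|x)}\le \frac{1-e^{-s}}{2\,e^{-s}}=\frac{e^{s}-1}{2},
\]
which exceeds the required $s/2=\tau/t$ for every $s>0$, is only ``$c\cdot\tau/t$'' when $s\le 1$, and blows up in the regime $t\lesssim 2\tau$ where $s=2\tau/t$ is not small (there the trivial bound $\onenorm{\cdot}\le 2$ does not rescue the stated constant either). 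The proposed fix---a first-order Taylor expansion of $e^{-\delta}$ with ``careful handling of the lower-order remainder''---is not carried out, and it is not routine: a naive expansion yields $\frac{s}{2}\bigl(1+O(s)\bigr)$, again short of the clean $s/2$ needed uniformly in $t$. So as written the proposal proves the lemma only up to a multiplicative constant (which would suffice for Theorem~\ref{thm:main} asymptotically, but not for the lemma and the explicit constants as stated).

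The missing ingredient is the paper's Lemma~\ref{lem:v_to_Q} (adapted from Proposition~4 of \citealt{GRW12}): instead of bounding the $\ell_1$ distance directly, bound the relative entropy between the two policies. Because both $Q_t(\cdot|x)$ and $Q_{t+1}(\cdot|x)$ are exponential reweightings of the same $P(\cdot|x)$, one has the exact expression
\[
\kl{Q_t(\cdot|x)}{Q_{t+1}(\cdot|x)}=\sum_y Q_t(y|x)\pa{v_{t+1}(y)-v_t(y)}+\log\frac{\sum_y P(y|x)e^{-v_{t+1}(y)}}{\sum_y P(y|x)e^{-v_t(y)}},
\]
which Hoeffding's lemma bounds by $\spannorm{v_{t+1}-v_t}^2/8$ with no smallness assumption; Pinsker's inequality then gives $\onenorm{Q_{t+1}(\cdot|x)-Q_t(\cdot|x)}\le\frac12\spannorm{v_{t+1}-v_t}\le\tau/t$ exactly, for all $t$. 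If you want to keep your direct route, you would have to replace the Popoviciu step by this KL/Hoeffding/Pinsker argument (or by an equally sharp analysis of the worst-case two-point perturbation), since that is precisely where the factor $\frac12$ and the uniformity in $s$ come from.
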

The proof is based on ideas by \citet{GRW12}. As for the proof of Theorem~\ref{thm:main}, we follow the path of 
\citet{even-dar09OnlineMDP,neu14o-mdp-full,GRW12}, and first analyze the idealized setting where the learner is allowed to directly pick 
stationary distributions instead of policies. Then, we show how to relate the idealized regret of FTL to its true regret in the original 
problem.

\subsection{Regret in the idealized OCO problem}
Let us now consider the idealized online convex optimization problem described at the end of Section~\ref{sec:online}. 
In this setting, our algorithm can be formally stated as choosing the stationary transition measure $\pi_t = \argmin_{\pi\in\Delta(M)} 
f(\pi;\bc_t)$. This view enables us to follow a standard proof technique for analyzing online convex optimization algorithms, going back to 
at least \citet{MF92}.
The first ingredient of our proof is the so-called ``follow-the-leader/be-the-leader'' lemma \citet[Lemma~3.1]{CBLu06:Book}:
\begin{lemma} \label{lem:btl}
$\sum_{t=1}^T \tloss_t(\pi_{t+1}) \le \min_\pi \sum_{t=1}^T \tloss_t(\pi)$.
\end{lemma}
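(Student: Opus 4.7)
The statement is the classical ``follow-the-leader/be-the-leader'' identity, so the natural approach is induction on $T$, exploiting the fact that the algorithm is defined by $\pi_t = \argmin_{\pi\in\Delta(M)} \sum_{s=1}^{t-1} \tloss_s(\pi)$, so in particular $\pi_{T+1}$ is the minimizer of the full cumulative loss $\sum_{t=1}^T \tloss_t(\pi)$.

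My plan is as follows. For the base case $T=1$, note that $\pi_2 = \argmin_{\pi\in\Delta(M)} \tloss_1(\pi)$ by the algorithm's definition, so trivially $\tloss_1(\pi_2) = \min_{\pi\in\Delta(M)} \tloss_1(\pi)$. For the inductive step, assume the inequality $\sum_{t=1}^{T-1} \tloss_t(\pi_{t+1}) \le \min_{\pi\in\Delta(M)} \sum_{t=1}^{T-1} \tloss_t(\pi)$ holds. Since $\min$ over $\pi$ is bounded above by the value at any particular point, I would specialize the right-hand side at $\pi = \pi_{T+1}$ to obtain
\[
\sum_{t=1}^{T-1} \tloss_t(\pi_{t+1}) \le \sum_{t=1}^{T-1} \tloss_t(\pi_{T+1}).
\]
Adding $\tloss_T(\pi_{T+1})$ to both sides yields
\[
\sum_{t=1}^{T} \tloss_t(\pi_{t+1}) \le \sum_{t=1}^{T} \tloss_t(\pi_{T+1}),
\]
and the right-hand side equals $\min_{\pi\in\Delta(M)} \sum_{t=1}^T \tloss_t(\pi)$ directly by the definition of $\pi_{T+1}$ as the empirical cost minimizer through round $T$. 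This closes the induction.

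The argument is essentially algebraic bookkeeping, and there is no genuine obstacle: one just has to keep track of the off-by-one between the round index and the ``leader'' index, since the FTL policy at round $t+1$ is the minimizer of losses up to and including round $t$. Note that the proof relies only on the definition of the algorithm and the existence of the minima (guaranteed by continuity of $\tloss_t = f(\cdot;c_t)$ on the compact set $\Delta(M)$), and not on any convexity, smoothness, or boundedness of the losses, so this is a purely combinatorial lemma that will later be combined with the stability bound from Lemma~\ref{lem:change} to turn the ``be-the-leader'' guarantee into a genuine regret bound for the actual FTL iterates $\pi_t$.
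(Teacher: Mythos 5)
Your induction is correct and is exactly the standard ``be-the-leader'' argument; the paper does not reprove this lemma but simply cites it (Lemma~3.1 of Cesa-Bianchi and Lugosi, 2006), whose proof is the same induction you give, including the correct handling of the off-by-one between round index and leader index and the observation that only existence of the minimizers is needed.
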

The second step exploits the bound on the change rate of the policies to show that looking one step into the future does not buy much 
advantage. Note however that controlling the change rate is not sufficient by itself, as our loss functions are effectively unbounded.
\begin{lemma}\label{lem:price}
$\sum_{t=1}^T \pa{\tloss_t(\pi_{t}) - \tloss_t(\pi_{t+1})} \le 2 \pa{\tau^2+1} \pa{1+\log T}$.
\end{lemma}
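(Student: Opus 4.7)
The plan is to bound each term $\tloss_t(\pi_t)-\tloss_t(\pi_{t+1})$ by a quantity of order $(1+\tau^2)/t$ and then sum the harmonic series. Two observations drive the argument: (a) since $f$ is affine in its cost argument, one can express $\tloss_t$ evaluated at any $\pi$ as the optimal value of $f(\cdot;\bc_t)$ plus a linear correction in $c_t-\bc_t$; (b) the policies $Q_t,Q_{t+1}$ are $\tau/t$-close in the per-state $\ell_1$ sense by Lemma~\ref{lem:change}, so their stationary distributions $\mu_t,\mu_{t+1}$ will also be close.

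First I would use the affinity of $f$ in $c$ to write
\[
 \tloss_t(\pi_t) = f(\pi_t;c_t) = f(\pi_t;\bc_t) + \sum_x \mu_t(x)\bpa{c_t(x)-\bc_t(x)} = \lambda_t + \sum_x \mu_t(x)\bpa{c_t(x)-\bc_t(x)},
\]
and similarly $\tloss_t(\pi_{t+1}) = \lambda_{t+1} + \sum_x \mu_{t+1}(x)(c_t(x)-\bc_{t+1}(x))$. Using the telescoping identity $\bc_{t+1}-\bc_t = (c_t-\bc_t)/t$, subtracting gives
\[
 \tloss_t(\pi_t)-\tloss_t(\pi_{t+1}) = (\lambda_t-\lambda_{t+1}) + \sum_x(\mu_t-\mu_{t+1})(x)(c_t-\bc_t)(x) + \frac{1}{t}\sum_x \mu_{t+1}(x)(c_t-\bc_t)(x).
\]
Since $c_t,\bc_t\in[0,1]$, the last sum is bounded by $1/t$ in absolute value. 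The middle term is at most $\onenorm{\mu_t-\mu_{t+1}}$. For the first term, combining the optimality of $\pi_t$ and $\pi_{t+1}$ with the affinity of $f$ in $c$ and $\infnorm{\bc_t-\bc_{t+1}}\le 1/t$ sandwiches $|\lambda_t-\lambda_{t+1}|\le 1/t$.

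Next I would bound $\onenorm{\mu_t-\mu_{t+1}}$ by a standard Markov-chain perturbation argument: from the identity $\mu_t-\mu_{t+1} = \mu_t(Q_t-Q_{t+1}) + (\mu_t-\mu_{t+1})Q_{t+1}$ together with the contraction $\onenorm{(\mu_t-\mu_{t+1})Q_{t+1}} \le \alpha\onenorm{\mu_t-\mu_{t+1}}$ (valid since $Q_{t+1}\in\Qstar$, by Lemma~\ref{lem:mixing}), we get $\onenorm{\mu_t-\mu_{t+1}} \le (1-\alpha)^{-1}\max_x\onenorm{Q_t(\cdot|x)-Q_{t+1}(\cdot|x)} \le 2\tau^2/t$, using Lemma~\ref{lem:change} together with the elementary estimate $(1-\alpha)^{-1}\le 2\tau$. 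Combining everything yields $\tloss_t(\pi_t)-\tloss_t(\pi_{t+1})\le (2\tau^2+2)/t$, and summing over $t$ with $\sum_{t=1}^T 1/t\le 1+\log T$ gives the claimed bound.

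The main obstacle is that naively $\tloss_t$ is not Lipschitz: its gradient is dominated by a negative conditional entropy, which diverges near the boundary of $\Delta(M)$, so one cannot simply multiply a uniform smoothness constant by $\norm{\pi_t-\pi_{t+1}}$. The key idea to circumvent this is observation (a) above: because $\pi_t$ is already the minimizer of $f(\cdot;\bc_t)$ over $\Delta(M)$, the ``steep'' part of $\tloss_t=f(\cdot;c_t)$ is absorbed into the optimal value $\lambda_t$, and only the affine residual in $c_t-\bc_t$---whose sensitivity is trivially controlled by $\infnorm{c_t-\bc_t}\le 1$---remains to be handled via the stationary-distribution perturbation.
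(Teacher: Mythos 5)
Your route is essentially the paper's: the identity $\tloss_t(\pi_t)=\lambda_t+\mu_t\transpose\pa{c_t-\bc_t}$ (the paper derives the same expression via the explicit formula for $\kl{Q_t(\cdot|x)}{P(\cdot|x)}$ in terms of $v_t$, you via affinity of $f$ in $c$, which is equally valid), the bound $|\lambda_t-\lambda_{t+1}|\le\infnorm{\bc_{t+1}-\bc_t}\le1/t$ from affinity plus optimality of $\pi_t,\pi_{t+1}$, a perturbation bound on $\onenorm{\mu_t-\mu_{t+1}}$ driven by Lemma~\ref{lem:change}, and a harmonic sum. Your grouping of the linear terms is in fact slightly tighter (you get $\onenorm{\mu_t-\mu_{t+1}}\infnorm{c_t-\bc_t}$ where the paper bounds $\pa{\mu_t-\mu_{t+1}}\transpose\pa{c_t+\bc_t}$ by $2\onenorm{\mu_t-\mu_{t+1}}$), and you prove the stationary-distribution perturbation inequality from the identity $\mu_t-\mu_{t+1}=\mu_t\pa{Q_t-Q_{t+1}}+\pa{\mu_t-\mu_{t+1}}Q_{t+1}$ plus the Markov--Dobrushin contraction, where the paper simply cites Lemma~4 of Neu et al.\ (2014); both give $\onenorm{\mu_t-\mu_{t+1}}\le\pa{1-\alpha}^{-1}\max_x\onenorm{Q_t(\cdot|x)-Q_{t+1}(\cdot|x)}$.

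The one step that fails as written is the elementary estimate $\pa{1-\alpha}^{-1}\le 2\tau$. Since $\log(1/\alpha)\ge 1-\alpha$, one always has $\tau=\pa{\log(1/\alpha)}^{-1}\le\pa{1-\alpha}^{-1}$, so your inequality amounts to $\log(1/\alpha)\le 2\pa{1-\alpha}$, which is false for small $\alpha$ (e.g.\ $\alpha=0.1$ gives $\log 10\approx 2.3>1.8$); it only holds when $\alpha$ is not too small, i.e.\ roughly when $\tau\ge 1$. The correct elementary bound is $\pa{1-\alpha}^{-1}=\pa{1-e^{-1/\tau}}^{-1}\le\tau+1$ (from $1-e^{-x}\ge x/(1+x)$), which gives $\onenorm{\mu_t-\mu_{t+1}}\le\tau\pa{\tau+1}/t$ and a per-round bound of $\pa{\tau^2+\tau+2}/t$. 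This recovers the stated constant $2\pa{\tau^2+1}$ whenever $\tau\ge1$, and in any case a bound of the same order in $T$ and $\tau$, so the slip is a constant-level one rather than structural; but the step as stated is incorrect and should be replaced by the $\tau+1$ estimate (or by citing the external lemma as the paper does).
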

In the interest of space, we only provide a proof sketch here and defer the full proof to Appendix~\ref{app:price}.
\begin{proofsketch}
 Let us define $\Delta_t = \bc_{t+1} - \bc_{t}$. By exploiting the affinity of $f$ in its second argument, we can start by proving
$\lambda_t - \lambda_{t+1} \le \infnorm{\Delta_t}$. Furthermore, by using the form of the optimal policy $Q_t$ given in Eq.~\eqref{eq:optQ} 
and the form of $f$ given in Eq.~\eqref{eq:statcost}, we can obtain
\begin{align*}
 \tloss_t(\pi_{t}) - \tloss_t(\pi_{t+1}) &= \pa{\mu_{t} - \mu_{t+1}}\transpose \pa{c_t + \bc_{t}} + \mu_{t+1}\transpose \pa{\bc_{t} - 
\bc_{t+1}}
 + \lambda_{t} - \lambda_{t+1}
 \\
 &\le 2 \onenorm{\mu_{t+1} - \mu_t} + 2\infnorm{\Delta_t}.
\end{align*}
The first term can be bounded by a simple argument (see, e.g., Lemma~4 of \citealt{neu14o-mdp-full}) that leads to
\[
\onenorm{\mu_{t+1} - \mu_t} \le \max\ev{\tau(Q_t),\tau(Q_{t+1})} \max_x \onenorm{Q_{t+1}(\cdot|x)-Q_t(\cdot|x)}.
\]
Now, the first factor can be bounded by $\tau$ and the second by appealing to Lemma~\ref{lem:change}. The proof is 
concluded by plugging the above bounds into Equation~\eqref{eq:lpidiff}, using $\infnorm{\Delta_t} \le 1/t$, summing up both sides, and 
noting that $\sum_{t=1}^T 1/t \le 1 + \log T$.
\end{proofsketch}
Putting Lemmas~\ref{lem:btl} and~\ref{lem:price} together, we obtain the following bound on the idealized regret of FTL:
\begin{lemma}\label{lem:ideal}
$\overline{R}_T \le 2\pa{\tau^2+1}\pa{1+\log T}$.
\end{lemma}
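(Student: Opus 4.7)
The statement follows essentially immediately by combining the two preceding lemmas, so my plan is just to assemble them in the right order. Starting from the definition
\[
\overline{R}_T = \sum_{t=1}^T \tloss_t(\pi_t) - \min_{\pi\in\Delta(M)}\sum_{t=1}^T \tloss_t(\pi),
\]
I would first invoke Lemma~\ref{lem:btl} (the ``follow-the-leader/be-the-leader'' lemma) to lower bound the second term. Specifically, Lemma~\ref{lem:btl} yields $\sum_{t=1}^T \tloss_t(\pi_{t+1}) \le \min_{\pi\in\Delta(M)} \sum_{t=1}^T \tloss_t(\pi)$, so negating and adding gives
\[
\overline{R}_T \le \sum_{t=1}^T \bpa{\tloss_t(\pi_t) - \tloss_t(\pi_{t+1})}.
\]

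The second and final step is to apply Lemma~\ref{lem:price} directly to the right-hand side, which bounds this sum by $2(\tau^2+1)(1+\log T)$, giving the claim.

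There is essentially no obstacle in this step: the two lemmas have been designed precisely so that their combination produces the stated bound. All of the genuine technical work is already isolated in Lemma~\ref{lem:price} (where the change rate of $\pi_t$ controlled by Lemma~\ref{lem:change}, the $\tau$-bound on the mixing time of policies in $\Qstar$, and the affinity of $f$ in its second argument are used) and in Lemma~\ref{lem:btl} (which is a standard one-line induction). Thus the proof of Lemma~\ref{lem:ideal} itself is just a two-line chaining of inequalities; I would write it out explicitly as displayed mathematics rather than adding further commentary.
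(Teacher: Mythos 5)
Your proposal is correct and matches the paper exactly: the paper obtains Lemma~\ref{lem:ideal} by precisely this two-step chaining, using Lemma~\ref{lem:btl} to bound $\min_{\pi}\sum_{t=1}^T \tloss_t(\pi)$ from below by $\sum_{t=1}^T \tloss_t(\pi_{t+1})$ and then applying Lemma~\ref{lem:price} to the resulting sum of differences. No further comment is needed.
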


\subsection{Regret in the reactive setting}
We first show that the advantage of the true best policy $Q^*_T$ over our final policy $Q_{T+1}$ is bounded. 
\begin{lemma} Let $p^* = \min_{x,x':P(x'|x)>0} P(x'|x)$ be the smallest non-zero transition probability under the passive dynamics and $B = 
-\log p^*$. Then,
$ \sum_{t=1}^T \bloss_t(\pi_{T+1}) - \Loss_T(Q_T^*) \le \pa{2\tau + 2}\pa{B+1}$.
\end{lemma}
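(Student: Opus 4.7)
The plan is to upper-bound $\sum_{t=1}^T \tloss_t(\pi_{T+1})$ first by the idealized-cost proxy for the comparator $Q_T^*$ (using the defining optimality of the FTL iterate), and then transport from the proxy to the true cumulative cost $\Loss_T(Q_T^*)$ by paying a one-shot mixing-time price on the initial transient.

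For the FTL step, let $\pi_T^*\in\Delta(M)$ denote the stationary transition measure induced by $Q_T^*$, and observe that $Q_T^*\in\Qstar$: since $f$ is affine in its cost argument, $Q_T^*$ coincides with the LMDP optimal control for the averaged cost $\frac{1}{T}\sum_{t=1}^T c_t\in\mathcal{C}$. By the defining property $\pi_{T+1}=\argmin_{\pi\in\Delta(M)}\sum_{s=1}^T f(\pi;c_s)$, we get $\sum_{t=1}^T \tloss_t(\pi_{T+1})\le\sum_{t=1}^T f(\pi_T^*;c_t)$, so it remains to control $\sum_t f(\pi_T^*;c_t)-\Loss_T(Q_T^*)$.

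For the transient step, write $\mu_t^*$ for the distribution of $X_t'$ when $Q_T^*$ is run from the given initial state. Unpacking Equation~\eqref{eq:statcost} and the definition of $\Loss_T$ gives $f(\pi_T^*;c_t) - \EE{\loss_t(X_t',Q_T^*)} = \sum_x\pa{\mu_{Q_T^*}(x)-\mu_t^*(x)}\pa{c_t(x)+\kl{Q_T^*(\cdot|x)}{P(\cdot|x)}}\le(B+1)\onenorm{\mu_{Q_T^*}-\mu_t^*}$. Here the per-state cap $B+1$ follows from $c_t\in[0,1]$ together with the support-based KL bound $\kl{Q(\cdot|x)}{P(\cdot|x)}\le\sum_{x'}Q(x'|x)\log(1/P(x'|x))\le B$, in which the first inequality uses non-negativity of the entropy of $Q(\cdot|x)$ and the second uses $P(x'|x)\ge p^*$ on its support. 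Applying Lemma~\ref{lem:mixing} to $Q_T^*\in\Qstar$ yields $\alpha(Q_T^*)\le\alpha$, and the Markov--Dobrushin contraction gives $\onenorm{\mu_{Q_T^*}-\mu_t^*}\le 2\alpha^{t-1}$, so $\sum_{t=1}^T\onenorm{\mu_{Q_T^*}-\mu_t^*}\le 2/(1-\alpha)$. The elementary inequality $1-e^{-x}\ge x/(1+x)$ applied at $x=1/\tau$ yields $1/(1-\alpha)\le\tau+1$; hence the cumulative deviation is at most $2(\tau+1)$, and chaining with the FTL step produces the claimed $(2\tau+2)(B+1)$.

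The main obstacle is conceptually harmless bookkeeping: the only quantitative fact beyond what the paper has already established is the calculus inequality $1-e^{-x}\ge x/(1+x)$ that converts the $\alpha$-geometric sum into a multiple of $\tau+1$. Everything else is a direct application of FTL optimality on $\Delta(M)$, the affinity of $f$ in the cost argument, the support-based KL capping by $B$, and the ergodicity already proved in Lemma~\ref{lem:mixing}.
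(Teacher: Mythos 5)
Your proof follows essentially the same route as the paper's (the paper's one-line proof is exactly: FTL optimality of $\pi_{T+1}$ over $\Delta(M)$, the per-step cap $\loss_t(x,Q^*_T)\le B+1$ coming from $\kl{Q(\cdot|x)}{P(\cdot|x)}\le -\log p^*=B$ under the support condition, and Lemma~1 of \citet{neu14o-mdp-full}, which is the geometric-mixing computation you write out explicitly, $\sum_{t}\onenorm{\mu_{Q_T^*}-\mu_t^*}\le 2/(1-\alpha)\le 2(\tau+1)$). Those ingredients, including the elementary bound $1-e^{-1/\tau}\ge 1/(\tau+1)$, are all correct.

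The step I would push back on is your justification that $Q_T^*\in\Qstar$. You argue that, by affinity of $f$ in its cost argument, $Q_T^*=\argmin_Q\Loss_T(Q)$ coincides with the LMDP-optimal control for the averaged cost $\bc_{T+1}$. That is not what $Q_T^*$ is: $\Loss_T(Q)$ is the expected cost along the trajectory actually generated from the initial distribution, so it includes the transient, and its minimizer is in general not the minimizer of the idealized total $\sum_t f(\pi_Q;c_t)$ (the latter minimizer is $Q_{T+1}$ itself, not $Q_T^*$; note also that feasible comparator policies need not even be irreducible). Affinity in $c$ does not bridge this. What your argument genuinely needs is only the mixing bound $\alpha(Q_T^*)\le\alpha$ (equivalently $\tau(Q_T^*)\le\tau$), but since $\alpha$ and $\tau$ are defined as maxima over $\Qstar$, Lemma~\ref{lem:mixing} gives this only once membership of $Q_T^*$ in $\Qstar$ (or some other mixing property of the comparator) is actually established. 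To be fair, the paper's own proof leaves the same point implicit, since the hypothesis of the cited Lemma~1 of \citet{neu14o-mdp-full} is a mixing condition on the policy it is applied to; but as written, your claim ``$Q_T^*$ is the optimal control for $\bc_{T+1}$'' is false in general and should be replaced either by a direct argument that the comparator mixes at rate $\tau$, or by an explicit statement that this property is being assumed/inherited. Everything else in your write-up matches the paper's argument.
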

The proof follows from applying Lemma~1 from \citet{neu14o-mdp-full} and observing that $\loss_t(X_t,Q_T^*) \le B+1$ holds for all $t$.
It remains to relate the total cost of FTL to the total idealized cost of the algorithm. This is done in the following lemma:
\begin{lemma} \label{lem:trueloss}
$ \sum_{t=1}^T \pa{\EE{\loss_t(Q_t,X_t)} - \bloss_t(\pi_{t})} \le \pa{\tau+1}^3 \pa{1+\log T}^2 + 2\pa{\tau + 1}\pa{3+\log T}$.
\end{lemma}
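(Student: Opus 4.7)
The plan is to bound, at each round $t$, the gap between the actual expected cost $\EE{\loss_t(X_t,Q_t)}$ and its idealized counterpart $\bloss_t(\pi_t)$, and then sum the per-round gaps. Let $\nu_t$ denote the true distribution of $X_t$ obtained by propagating $\mu_0$ through the actual policy sequence $Q_1,\dots,Q_{t-1}$. Since the cost sequence is oblivious and $\mu_t\transpose Q_t = \mu_t\transpose$, the idealized loss can be written as $\bloss_t(\pi_t) = \sum_x \mu_t(x) g_t(x)$ while $\EE{\loss_t(X_t,Q_t)} = \sum_x \nu_t(x) g_t(x)$, where $g_t(x) = c_t(x) + \kl{Q_t(\cdot|x)}{P(\cdot|x)}$ is the per-state cost. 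The per-round gap then equals $\sum_x (\nu_t(x)-\mu_t(x)) g_t(x)$, and by Lemma~\ref{lem:vbound} we have $\infnorm{g_t}\le H+2$. This reduces the problem to controlling $\onenorm{\nu_t - \mu_t}$.

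To bound $\onenorm{\nu_t - \mu_t}$ I would follow the ``look-back $N$ steps'' strategy of \citet{even-dar09OnlineMDP,neu14o-mdp-full}: for a parameter $N$ to be chosen later, introduce the auxiliary distribution $\wt{\nu}_t = \nu_{t-N} Q_t^N$ (i.e., what we would have obtained had $Q_t$ been applied during the previous $N$ rounds) and decompose
\[
 \onenorm{\nu_t-\mu_t} \le \onenorm{\nu_t - \wt{\nu}_t} + \onenorm{\wt{\nu}_t - \mu_t}.
\]
The second summand is at most $\alpha(Q_t)^N \onenorm{\nu_{t-N}-\mu_t} \le 2\alpha^N$ using Lemma~\ref{lem:mixing}. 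For the first summand, the standard telescoping inequality for products of stochastic kernels combined with Lemma~\ref{lem:change} and a telescoping triangle inequality for $\max_x \onenorm{Q_s(\cdot|x)-Q_t(\cdot|x)}$ yields
\[
 \onenorm{\nu_t-\wt{\nu}_t} \le \sum_{s=t-N}^{t-1} \max_x \onenorm{Q_s(\cdot|x) - Q_t(\cdot|x)} \le \sum_{s=t-N}^{t-1} \sum_{r=s}^{t-1} \frac{\tau}{r}.
\]

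Choosing $N = \lceil\tau(1+\log T)\rceil$ makes $\alpha^N \le 1/T$, so the total contribution of the mixing term over all $t$ is of order $\tau+1$. For the telescoping term, estimating the double sum (using $\sum_{r=s}^{t-1} 1/r \le (t-s)/s$ and then summing over $s \in [t-N,t-1]$) yields a per-round bound of order $\tau N^2/t$ for $t > N$. Summing over $t$ and multiplying by $\infnorm{g_t}\le H+2$, together with the fact that $H$ is absorbed into $\tau$ via Lemma~\ref{lem:mixing}, produces the main $(\tau+1)^3(1+\log T)^2$ term. The boundary rounds $t\le N$ contribute at most $N\cdot(H+2)$, which gets absorbed into the additive $2(\tau+1)(3+\log T)$ term.

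The main obstacle is the bookkeeping of the look-back decomposition: verifying that the double sum arising from the telescoping estimate collapses to the right order, that the choice of $N$ properly balances the mixing error against the policy-drift error, and that the pre-$N$ rounds can be controlled without spoiling the rate. Of these, the tight double-summation estimate is the most delicate step, since a naive bound on $\sum_{r=s}^{t-1} 1/r$ by $\log(t/s)$ loses an extra logarithmic factor that must be avoided by exploiting the near-uniformity of $r$ in the narrow look-back window.
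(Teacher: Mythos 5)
Your opening reduction is sound (the per-round gap is indeed $\sum_x\pa{\nu_t(x)-\mu_t(x)}g_t(x)$ with $\nu_t=p_t$), but the argument as described does not prove the stated bound, for two concrete reasons. First, the rate: with a fixed look-back window $N=\lceil\tau(1+\log T)\rceil$, your drift term is $\sum_{s=t-N}^{t-1}\max_x\onenorm{Q_s(\cdot|x)-Q_t(\cdot|x)}\le \tau N^2/t$ per round, and summing this over $t$ costs another factor $\sum_t 1/t\approx\log T$, giving a total of order $\tau^3\log^3 T$, not $\log^2 T$; you cannot shrink $N$ below order $\tau\log T$ without the mixing term $2\alpha^N$ becoming non-summable. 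The paper avoids exactly this loss by not using a hard window: it unrolls the one-step recursion $\onenorm{p_t-\mu_t}\le e^{-1/\tau}\onenorm{p_{t-1}-\mu_{t-1}}+\pa{\tau+1}\varepsilon_t$ (Equation~\eqref{eq:mu_to_p}), so each per-step drift $\varepsilon_n\le\tau/n$ enters with geometric weight $e^{-(t-n)/\tau}$, i.e.\ total weight $O(\tau)$ rather than $O(\tau\log T)$; this yields the per-round bound $O\pa{\tau^3\log t/t}$ and hence $\log^2T$ after summation. In your block decomposition each $\varepsilon_r$ is instead counted once for every $s\le r$ in the window, which is where the extra logarithm comes from.

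Second, the constant factor $\infnorm{g_t}\le H+2$ cannot be ``absorbed into $\tau$ via Lemma~\ref{lem:mixing}'': that lemma bounds $\alpha(Q)$ (hence $\tau$) \emph{from above} in terms of $H$, so it gives $\tau\lesssim e^{H+2}/(1-\alpha(P))$ and says nothing of the form $H=O(\tau)$; your final bound would therefore retain an explicit $H$, which the statement (and the theorem's bound~\eqref{eq:fullbound}) does not have. One could repair this by bounding $\spannorm{g_t}$ through $\spannorm{v_t}\le 2\tau$ via Lemma~\ref{lem:c_to_v} instead, but even then you pay an extra multiplicative $O(\tau)$ on the whole $\sum_t\onenorm{p_t-\mu_t}$ term. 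The paper sidesteps both issues at once: using Equation~\eqref{eq:lpiform} it splits $g_t$ into the bounded part $c_t-\bc_t+\lambda_t$ (so $\onenorm{p_t-\mu_t}$ enters with coefficient $1$, since $\infnorm{c_t-\bc_t}\le1$ and the constant $\lambda_t$ cancels against $\sum_x\pa{p_t(x)-\mu_t(x)}=0$) and the value-function part, which is handled by the Abel-summation identity $\sum_t\pa{p_t-p_{t+1}}\transpose v_t=\sum_t p_t\transpose\pa{v_t-v_{t-1}}-p_{T+1}\transpose v_T$ together with the Lipschitz bound of Lemma~\ref{lem:c_to_v} ($\spannorm{v_t-v_{t-1}}\le 2\tau/t$). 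You would need to import both of these devices (the geometric recursion and the value-function telescoping) to recover the claimed $\pa{\tau+1}^3\pa{1+\log T}^2$ bound.
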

\begin{proof}
  Let $p_t(x) = \PP{X_t = x}$. Similarly to the proof of Lemma~\ref{lem:price}, we rewrite $\bloss_t(\pi_t)$ using 
Equation~\eqref{eq:lpiform} to obtain
 \[
 \begin{split}
  \EE{\loss_t(Q_t,X_t) - \bloss_t(\pi_{t})}
  &=
  \sum_{x} \pa{p_t(x) - \mu_t(x)} \pa{c_t(x) + v_t(x) + \lambda_{t} - \bc_{t}(x) - \sum_{x'} Q_t(x'|x) v_t(x')}
  \\
  &\le  \sum_{x} p_t(x) \pa{v_t(x) - \sum_{x'} Q_t(x'|x) v_{t}(x')} + \onenorm{p_t- \mu_t},
 \end{split}
 \]
 where the last step uses $\sum_{x} \mu_t(x) Q_t(x'|x) = \mu_t(x')$ and $\infnorm{c_t - \bc_t}\le 1$.
 Now, noticing that $\sum_{x} p_t(x) Q_t(x'|x) = p_{t+1}(x')$, we obtain
 \[
 \begin{split}
  &\sum_{t=1}^T \EE{\loss_t(Q_t,X_t) - \bloss_t(\pi_{t})}
  \le \sum_{t=1}^T \pa{p_t-p_{t+1}}\transpose v_t + \sum_{t=1}^T \onenorm{\mu_t - p_t}
  \\
  &\quad= \sum_{t=1}^{T} p_t\transpose \pa{v_t - v_{t-1}} + \sum_{t=1}^T \onenorm{\mu_t - p_t} - p_{T+1}\transpose v_T
  \le \sum_{t=1}^T \frac{2\tau}{t} + \sum_{t=1}^T \onenorm{\mu_t - p_t} - p_{T+1}\transpose v_T,
 \end{split}
 \]
 where the last inequality uses Lemma~\ref{lem:c_to_v} and $\infnorm{\bc_{t} - \bc_{t-1}}\le 1/t$ to bound the first term. By 
Lemma~\ref{lem:c_to_v}, this last term can be bounded by $\norm{v_T}_s = \norm{v_T - v_0}_s\le 2\tau\infnorm{\bc_T} \le 2\tau$, where $v_0$ 
is the value function corresponding to the all-zero state-cost function.
 
 In the rest of the proof, we are going to prove the inequality
 \begin{equation}\label{eq:pmudiff}
 \onenorm{\mu_t - p_t} \le 2 e^{-\pa{t-1}/\tau} +  \frac{2(\tau+1)^3\pa{1+ \log t}}{t}.
 \end{equation}
 It is easy to see that this trivially holds for $\pa{2 \tau \log t}/t \ge 
1$, so we will assume that the contrary holds in the following derivations.
To prove Equation~\eqref{eq:pmudiff} for larger values of $t$, we can follow the proofs 
of Lemma~5 of \citet{neu14o-mdp-full} or Lemma~5.2 of \citet{even-dar09OnlineMDP} to obtain
\begin{equation}\label{eq:mu_to_p}
 \onenorm{\mu_t - p_t} \le 2 e^{-\pa{t-1}/\tau} +  \tau \pa{\tau+1} \sum_{n=1}^{t-1} \frac{e^{-(t-n)/\tau}}{n}.
\end{equation}
For completeness, we include a proof in Appendix~\ref{app:mu_to_p}.
For bounding the last term, we split the sum at $B=\left\lfloor t - \tau\log t\right\rfloor$:
\[
 \begin{split}
  \sum_{n=1}^{t-1} \frac{e^{-(t-n)/\tau}}{n} &= 
  \sum_{n=1}^{B} \frac{e^{-(t-n)/\tau}}{n} + \sum_{n=B+1}^{t-1} \frac{e^{-(t-n)/\tau}}{n}
 \\
 &= e^{-(t-B)/\tau} \sum_{n=1}^{B} \frac{e^{-(B-n)/\tau}}{n} + \sum_{n=B+1}^{t} \frac{e^{-(t-n)/\tau}}{n}
 \\
 &\le \frac{1}{t} \cdot \frac{1}{1-e^{-1/\tau}} + \frac{\tau\log t}{t-\tau\log t} \le 
 \frac{\tau }{t} + \frac{\tau\log t}{t} \cdot \frac{1}{1-\pa{\tau\log t}/ t}
 \\
 &\le \frac{\tau}{t} + \frac{2\tau \log t}{t} \le \frac{2\tau\pa{1 + \log t}}{t},
 \end{split}
\]
where the first inequality follows from bounding the $1/n$ factors by $1$ and $1/B$, respectively, and bounding the sums by 
the full geometric sums. The second-to-last inequality follows from our assumption that $(2\tau\log t)/t \le 1$.  That is, we have 
successfully proved Equation~\eqref{eq:pmudiff}. Now the statement of the lemma follows from summing up for all $t$ and noting that 
$\sum_{t=1}^T \frac{1}{t} \le 1 + \log T$ and $\sum_{t=1}^T e^{-\pa{t-1}/\tau} \le \tau + 1$.
\end{proof}
Now the proof of Theorem~\ref{thm:main} follows easily from combining the bounds of Lemmas~\ref{lem:ideal}--\ref{lem:trueloss}. The result 
is
\begin{equation}\label{eq:fullbound}
 R_T \le 
 2\pa{\tau+1}^3 \pa{1+\log T}^2 + 2\pa{\tau^2 + \tau + 2}  (3+\log T) + \pa{2\tau+2}\pa{B+2}.
\end{equation}
Thus, we can see that the bound indeed demonstrates a polynomial dependence on the mixing time $\tau$, 
and depends logarithmically on the smallest non-zero transition probability $p^*$ via $B = -\log p^*$.

\section{Discussion}\label{sec:discussion}
In this paper, we have shown that, besides the well-established computational advantages, linearly solvable MDPs also admit a remarkable
information-theoretic advantage: fast learnability in the online setting. In particular, we show that achieving a regret of $O(\log^2 T)$ 
is achievable by the simple algorithm of following the leader, thus greatly improving on the best previously known regret bounds of 
$O(T^{3/4})$. At first sight, our improvement may appear dramatic: in their paper, \citet{GRW12} pose the possibility of improving their 
bounds to $O(\sqrt{T})$ as an important open question (Sec.~VII.). In light of our results, these conjectured improvements are also grossly 
suboptimal. On the other hand, our new results can be also seen to complement well-known results on fast rates in online learning (see, 
e.g., \citealt{EGMRW15} for an excellent summary). Indeed, our learning setting can be seen as a generalized variant of sequential 
prediction under the  relative-entropy loss (see, e.g., \citealp[Sec.~3.6]{CBLu06:Book}), which is known to be \emph{exp-concave}. Such 
exp-concave losses are well-studied in the online learning literature, and are known to allow logarithmic regret bounds \citep{KW99,HAK07}.

Inspired by these related results, we ask the question: Is the loss function $f$ defined in Section~\ref{sec:convex} exp-concave? While our 
derivations Appendix~\ref{app:convexity} indicate that $f$ has curvature in certain directions, we were not able to prove its 
exp-concavity. Similarly to the approach of \citet{MF92}, our analysis in the current paper merely exploits the Lipschitzness of the optimal 
policies with respect to the cost functions, but otherwise does not explicitly make use of the curvature of $f$. We hope that our work 
presented in this paper will inspire future studies that will clarify the exact role of the LMDP structure in efficient online learnability, 
potentially also leading to a better understanding of policy gradient algorithms for LMDPs \citep{Tod10}.

Finally, let us comment on the tightness of our bounds. Regardless of whether the loss function $f$ is exp-concave or not, 
we are almost certain that our rates can be improved to at least $O(\log T)$ by using a more sophisticated algorithm. While our focus in 
this paper was on improving the asymptotic regret guarantees, we also slightly improve on the results \citet{GRW12} in that we make the 
leading constants more explicit. However, we expect that the dependence on these constants may also be improved in future work. 
Note however that the potential 
looseness of our bounds does not impact the performance of the algorithm itself, as it never makes use of any problem-dependent constants.

\paragraph{Acknowledgements}
This work was supported by the UPFellows Fellowship (Marie Curie COFUND program n${^\circ}$ 
600387) and the Ramon y Cajal program RYC-2015-18878 (AEI/MINEICO/FSE, UE). 
The authors wish to thank the three anonymous reviewers for their valuable comments that helped to improve the paper.

\newpage
\appendix

\section{The convex optimization view of optimal control in LDMPs}
This section summarizes some facts regarding the convex optimization formulation of Section~\ref{sec:convex}. We first show that the 
negative conditional entropy constituting the only nonlinear term in the objective $f(\pi;c)$ is convex.
\subsection{The convexity of the negative conditional entropy}
\label{app:convexity}
Let us consider the joint probability distribution $\pi$ on the finite set $\X^2$. We denote $\mu(x) = \sum_y p(x,y)$ and $Q(y|x) = 
p(x,y)/\mu$. We study the negative conditional entropy of 
$(X,Y)\sim\pi$ as a function of $\pi$:
\[
\begin{split}
R(\pi) 
 &= \sum_{x,y} \pi(x,y) \log\frac{\pi(x,y)}{\sum_{y'} \pi(x,y')}
= \sum_{x,y} \pi(x,y) \log\frac{\pi(x,y)}{\mu(x)} 
\end{split}
\]
We will study the Bregman divergence $B_R$ corresponding to $R$:
\[
 B_R\left(\left.\pi'\right|\pi\right) = R(\pi') - R(\pi) - \nabla R(\pi) \transpose (\pi' - \pi).
\]
Our aim is to show that $B_R$ is nonnegative, which will imply the convexity of $R$.

We begin by computing the partial derivative of $R(\pi)$ with respect to $\pi(x,y)$:
\[
 \frac{\partial R(\pi)}{\partial \pi(x,y)} = \log\pa{\pi(x,y)} - \log\pa{\mu(x)},
\]
where we used the fact that $\frac{\partial \mu(x)}{\partial \pi(x,y)} = 1$ for all $y$.
With this expression, we have
$-H(Y|X)$:
\[
\begin{split}
 R(\pi) + \nabla R(\pi) \transpose (\pi' - \pi)
 &= \sum_{x,y} \pi(x,y) \log\frac{\pi(x,y)}{\mu(x)} + \sum_{x,y} \pa{\pi'(x,y) - \pi(x,y)} \log\frac{\pi(x,y)}{\mu(x)}
 \\
 &= \sum_{x,y} \pi'(x,y) \log\frac{\pi(x,y)}{\mu(x)}.
\end{split}
\]
Thus, the Bregman divergence takes the form
\[
\begin{split}
 B_R\left(\left.\pi'\right|\pi\right) &= \sum_{x,y} \pi'(x,y) \pa{\log\frac{\pi'(x,y)}{\mu'(x)} - \log\frac{\pi(x,y)}{\mu(x)}}
 \\
 &= \sum_{x,y} \pi'(x,y) \log\frac{Q'(y|x)}{Q(y|x)}
 = \sum_{x} \mu'(x) \sum_y Q'(y|x) \log\frac{Q'(y|x)}{Q(y|x)}
 \\
 &= \sum_{x} \mu'(x) \kl{Q'(\cdot|x)}{Q(\cdot|x)} \ge \frac 12 \sum_{x} \mu'(x) \onenorm{Q'(\cdot|x) - Q(\cdot|x)}^2,
\end{split}
\]
where the last step follows from Pinsker's inequality. Thus, we have shown that the Bregman divergence $B_R$ is nonnegative on 
$\Delta(\X^2)$, proving that $R(\pi)$ is convex.

\subsection{Derivation of the optimal control}
\label{app:dual}
Here, we give an alternative derivation of the optimal control given in Section~\ref{sec:bellman} based on the optimization problem 
$\min_{\pi\in\Delta(M)} f(\mu;c)$ for an arbitrary bounded state-cost function $c$. As a reminder, $f(\pi;c)$ is given by
\[
 f(\pi;c) = \sum_{x,x'} \pi(x,x') \pa{c(x) + \log\frac{\pi(x,x')}{P(x'|x)\sum_y \pi(x,y)}}
\]
and the feasible set $\Delta(M)$ is given by the following convex constraints:
\[
\begin{split}
 \sum_{x'} \pi(x,x') &= \sum_{x''} \pi(x'',x) \;\;\;\;\;\,\qquad (\forall x),
 \\
 \sum_{x,x'} \pi(x,x')&=1,
 \\
 \pi(x,x') &\ge 0 \qquad\qquad\qquad\qquad (\forall x,x'),
 \\
 \pi(x,x') &= 0 \qquad\qquad\qquad\qquad (\forall x,x': P(x'|x)=0).
\end{split}
\]
We begin by slightly adjusting the definition of $f(\cdot;c)$ for it to become a \emph{barrier function}: we set $f(\pi;c) = \infty$ for 
all $\pi$ not satisfying the last two constraints. It is easy to see that this adjustment does not change the optimum of $f(\cdot;c)$, but 
it helps getting rid of the inequality constraints. Thus, with this form of $f$, we can characterize the optimum of 
$f(\cdot;c)$ using the technique of Lagrange multipliers\footnote{Alternatively, one could introduce KKT multipliers for all constraints 
and eliminate the last two by complementary slackness, which yields the same characterization.}.

Precisely, we introduce a Lagrange multiplier $v(x)$ for every $x$ to enforce the first constraint and $\lambda$ to 
enforce the second one, and write 
the Lagrangian as
\[
\begin{split}
 \mathcal{L}(\pi;v,\lambda) =& \sum_{x,x'} \pi(x,x') \pa{c(x) + \log\frac{\pi(x,x')}{P(x'|x)\sum_y\pi(x,y)}} + \lambda \pa{\sum_{x,x'} 
\pi(x,x') - 1}
 \\
 &+\sum_{x,x'} v(x) \pa{\pi(x,x') - \pi(x',x)}
 \\
 =& \sum_{x,x'} \pi(x,x') {\log\frac{\pi(x,x')}{P(x'|x)\sum_y\pi(x,y)}} + \sum_{x,x'} \pi(x,x') \pa{c(x) + \lambda + v(x) - v(x')} - 
\lambda
\end{split}
\]
Let $\mu(x) = \sum_y \pi(x,y)$, noting that $\partial \mu(x)/\partial \pi(x,y) = 1$ for all $y$. Differentiate the Lagrangian 
with respect to a fixed $\pi(x,x')$:
\[
\begin{split}
 \frac{\partial\mathcal{L}(\pi;v,\lambda)}{\partial \pi(x,x')}
=& \log \pa{\pi(x,x')} - \log\pa{\mu(x)} + \pa{c(x) + \lambda + v(x) - v(x') - \log P(x'|x)}.
\end{split}
\]
Setting the gradient to zero, we obtain the following formula for $\pi(x,x')/\mu(x)$:
\[
 \frac{\pi(x,x')}{\mu(x)} = P(x'|x) \cdot \exp\pa{-c(x) - \lambda - v(x) + v(x')}. 
\]
Since $\pi(x,y)\ge 0$ for all $x,y$, we have $\sum_{x'} \frac{\pi(x,x')}{\sum_y \pi(x,y)} = 1$ and thus
\[
 \sum_{x'} P(x'|x) \exp\pa{-c(x) - \lambda + v(x') - v(x)} = 1.
\]
Introducing the variables $z(x)$ for all $x$, we recover the linear system of equations in Equation~\eqref{eq:bellman_Z}:
\[
 z(x) = \sum_{x'} P(x'|x) \exp\pa{-c(x) - \lambda} z(x').
\]
Plugging back into the Lagrangian, we obtain that the dual function is $\mathcal{L}(\lambda) = -\lambda$, which now needs to be maximized 
subject to the above constraint, implying that $\exp(-\lambda)$ is indeed the largest eigenvalue of the matrix $GP$. Furthermore, by strong 
duality, the $\lambda$ maximizing the dual is indeed the minimum of the primal $f(\pi;c)$ on $\Delta(M)$.

\section{Technical proofs}\label{sec:app}
In this section, we prove the preliminary lemmas from Section~\ref{sec:analysis}.
\subsection{The proof of Lemma~\ref{lem:vbound}}
 The idea of the proof is similar to the proof of Theorem~4 of \citet{bartlett09regal}. By our Assumption~\ref{ass:irred} and the fact 
that all feasible control policies retain the structural properties of the passive dynamics, our MDP satisfies the conditions of 
Proposition~4.3.2 of \citet{Ber07:DPbookVol2}, so \emph{value iteration} converges to the solution of the Bellman optimality equations.
 Let $J_n(x)$ be the total expected cost of the best $n$-horizon policy started output by the value iteration procedure and let $q_n$ 
denote the corresponding policy. Then, consider the strategy of following the passive dynamics from $x$ until hitting $y$ and then 
switching 
to the optimal finite-horizon policy optimized for the remaining rounds. By the finite-horizon optimality of $q_n$, this strategy is 
clearly suboptimal: letting $L$ denote the \emph{random} number of steps taken for reaching $y$ under the passive dynamics, this 
suboptimality can be expressed as $J_n(x) \le \EE{L + J_{n-L}(y)}$.
Thus, by the fact that value iteration converges, we have
\begin{align*}
 v(x) - v(y) =& \lim_{n\ra\infty} \bpa{J_n(x) - J_n(y)}
 \\
 \le& \lim_{n\ra\infty} \bpa{\EE{L + J_{n-L}(y) - J_n(y)}} \le H,
\end{align*}
where we used that $J_{k}(y) \le J_{n}(y)$ for all $k\le n$ and that $\EE{L} \le H$ by Assumption~\ref{ass:irred}. This concludes the proof 
of the first statement.
 As for the second statement, note that 
the associated optimal policy $Q$ can be written as
\begin{equation}\label{eq:klform}
  Q(x'|x) = P(x'|x) \exp\pa{\lambda - c(x) - v(x') + v(x)},
\end{equation}
 so the control cost can be bounded for all $x$ as
 \[
  \kl{Q(\cdot|x)}{P(\cdot|x)} = v(x) + \lambda - c_t(x) - \sum_{x'} Q(x'|x) v(x') \le H+1,
 \]
 thus concluding the proof.\jmlrQED

\subsection{The proof of Lemma~\ref{lem:mixing}}\label{sec:app:mixing}
It is well known (see, e.g., \cite{Sen2006}) that the Markov-Dobrushin coefficient $\alpha(P)$ of a 
transition kernel $P$ satisfies
\[
 \alpha(P) = 1 - \min_{x,y\in\X} \sum_s \min\ev{P(s|x),P(s|y)}
\]
Now, let us observe that by the expression~\eqref{eq:klform}, we can prove the statement of the lemma as
\[
\begin{split}
 \alpha(Q) =&  1 - \min_{x,y\in\X} \sum_s \min\ev{Q(s|x),Q(s|y)}
 \\
 =& 1 - \min_{x,y\in\X} \sum_k \min\ev{P(s|x)\exp\pa{v(x) - c(x)},P(s|y)\exp\pa{v(y) - c(y)}} \exp\pa{\lambda - v(s)}
 \\
 \le & 1 - \min_{x,y\in\X} \sum_k \min\ev{P(s|x),P(s|y)} \exp\pa{\lambda - v(s) - 1 + \min_{s'} v(s')}
 \\
 \le & 1 - \min_{x,y\in\X} \sum_k \min\ev{P(s|x),P(s|y)} e^{-H-1} = e^{-1/\tau} + \pa{1-e^{-1/\tau}}\pa{1-e^{-H-1}},
\end{split}
\]
where the last inequality follows from applying Lemma~\ref{lem:vbound} to show $v(s) - v(s') \le H$ for this particular pair of states.
\jmlrQED

\subsection{The proof of Lemma~\ref{lem:c_to_v}}\label{sec:app:c_to_v}
 The proof roughly follows the proof of Proposition~3 of \citet{GRW12}. Let us define the Bellman optimality operator 
$B_c:\real^{\X}\ra\real^\X$ associated with the state-cost function $c$ that acts on any value function $v$ as $\pa{B_c v}(x) = c(x) + 
\min_{q\in\Delta(\X)} \ev{\kl{q}{P(\cdot|x)} + \sum_y q(y) v(y)}$. With this operator, we can express the Bellman optimality equations for 
$v_f$ as $v_f + \lambda_f = B_c v_f$. Thus, we have
\begin{equation}\label{eq:span}
\begin{split}
 \spannorm{v_f - v_g} &= \spannorm{B_f v_f - \lambda_f - B_g v_g + \lambda_g}
 \\
 &= \spannorm{B_f v_f - B_g v_g} \le \spannorm{B_f v_f - B_g v_f} + \spannorm{B_g v_f - B_g v_g},
\end{split}
\end{equation}
where the second equality follows from the fact that the span seminorm is insensitive to shifting by constants and the last step follows 
from the triangle inequality. The first term in the above expression can be easily bounded noting that $B_f v_f - B_g v_f = f - g$ by the 
definition of the Bellman operator. For the second term, we can follow the argument of \citet{GRW12} to show that
\[
 \spannorm{B_g v_f - B_g v_g} \le \frac 12 \spannorm{v_f - v_g} \max_{x,y} \onenorm{Q_f(\cdot|x) - Q_g(\cdot|y)}.
\]
Now for controlling the last factor in the above expression, we take an arbitrary pair of states $x$ and $y$ and write
\[
\begin{split}
 &\frac 12 \sum_s \left|Q_f(s|x) - Q_g(s|y)\right| = \frac 12 \sum_s \left(Q_f(s|x) + Q_g(s|y)\right) - \sum_s \min\ev{Q_f(s|x),Q_g(s|y)}
 \\
 &= 1 - \sum_s \min\ev{P(s|x)\exp(v_f(x)\!-\!f(x)\!+\!\lambda_f \!-\! v_f(s)),P(s|y)\exp(v_g(x)\!-\!g(x)\!+\!\lambda_f \!-\! v_g(s))}
 \\
 &\le 1 - \sum_s \min\ev{P(s|x),P(s|y)} e^{-H-1} = \alpha(P) + \pa{1-\alpha(P)}\pa{1-e^{-H-1}} = \alpha,
\end{split}
\]
where the first step uses the equality $|a+b| = \frac 12 |a+b| - \min\ev{a,b}$ that holds for any two real numbers $a,b$, the second step 
follows from Equation~\eqref{eq:klform}, the inequality from Lemma~\ref{lem:vbound}, and the last steps use the respective definitions of 
$\alpha(P)$ and $\alpha$.
In summary, we have proved that $\spannorm{B_g v_f - B_g v_g} \le \alpha \spannorm{v_f - v_g}$
holds for the worst-case ergodicity coefficient $\alpha<1$. Plugging this bound back into Equation~\eqref{eq:span} gives 
$\spannorm{v_f - v_g} \le \frac{1}{1-\alpha} \spannorm{f-g}$, which can be seen to imply the statement of the lemma after observing 
$\spannorm{f-g}\le 2\infnorm{f-g}$ and $\frac{1}{1-\alpha}\le \tau$.
\jmlrQED

\subsection{The proof of Lemma~\ref{lem:change}}\label{sec:app:v_to_Q}
The proof builds on the following lemma, adapted from Proposition~4 of \citet{GRW12}:
\begin{lemma}\label{lem:v_to_Q}
Let $f$ and $g$ be two state-cost functions taking values in the interval $[0,1]$, let $v_f$ and $v_g$ be the corresponding optimal value 
functions and $Q_f$ and $Q_g$ be the respective optimal policies. Then, 
 \[
  \max_{x\in\X} \onenorm{Q_f(\cdot|x) - Q_g(\cdot|x)} \le \frac 12 \spannorm{v_f-v_g}.
 \]
\end{lemma}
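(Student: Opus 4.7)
The plan is to exploit the fact that the optimal LMDP policy depends on the value function only through exponential weighting, and then use a one-parameter interpolation argument through a family of tilted distributions to recover a linear dependence on the span.

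First I would normalize the two value functions: since the defining formula $Q_f(y|x) = P(y|x)e^{-v_f(y)}/\sum_{y'} P(y'|x)e^{-v_f(y')}$ is invariant under adding a constant to $v_f$, I may assume without loss of generality that $\phi(y) := v_f(y) - v_g(y)$ lies in $[0, s]$ for all $y$, where $s := \spannorm{v_f - v_g}$. Plugging this decomposition into the defining formula immediately identifies $Q_f$ as an exponential tilting of $Q_g$, namely $Q_f(y|x) = Q_g(y|x)\, e^{-\phi(y)}/Z(x)$ with $Z(x) = \sum_{y'} Q_g(y'|x)\, e^{-\phi(y')}$. The key move is then to interpolate by setting $Q_t(y|x) \propto Q_g(y|x)\, e^{-t\phi(y)}$ for $t \in [0,1]$, so that $Q_0 = Q_g$ and $Q_1 = Q_f$. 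A short calculation with the standard exponential-family score formula yields $\partial_t Q_t(y|x) = -Q_t(y|x)\bigl(\phi(y) - \bar\phi_t(x)\bigr)$, where $\bar\phi_t(x) := \sum_{y'} Q_t(y'|x)\phi(y')$.

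Applying the fundamental theorem of calculus and the $\ell_1$ triangle inequality I would then get
\[
\onenorm{Q_f(\cdot|x) - Q_g(\cdot|x)} \le \int_0^1 \sum_y Q_t(y|x)\bigl|\phi(y) - \bar\phi_t(x)\bigr|\, dt,
\]
and the inner quantity is exactly the mean absolute deviation of $\phi$ under $Q_t(\cdot|x)$. Since $\phi$ takes values in an interval of length $s$, a Popoviciu-style extremal argument (the MAD of a random variable supported in $[a,b]$ is maximized by the two-point distribution on the endpoints with mean at the midpoint, yielding $(b-a)/2$) bounds this MAD by $s/2$ uniformly in $t$. Integrating over $t$ and maximizing over $x$ then gives the claim.

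The main obstacle is really in securing the linear-in-$s$ rate rather than a weaker one: a naive pointwise bound $|Q_f/Q_g - 1| \le e^s - 1$ only produces an exponential dependence, and a Pinsker-style route through the KL divergence produces only an $O(\sqrt{s})$ rate. The exponential-tilting interpolation is precisely what lets me integrate a linear-in-$s$ quantity (the MAD along the path) instead of a quadratic or exponential one, and this tight rate is what Lemma~\ref{lem:change} subsequently needs in order to convert $\infnorm{\bc_{t+1} - \bc_t} \le 1/t$ into the $O(1/t)$ policy change bound that drives the final $O(\log^2 T)$ regret.
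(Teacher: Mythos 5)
Your proof is correct, and it takes a genuinely different route from the paper's. The paper bounds $\kl{Q_f(\cdot|x)}{Q_g(\cdot|x)}$ directly: writing out the log-ratio using the exponentiated-value form of the optimal policy, the KL reduces to a cumulant-type expression that Hoeffding's lemma bounds by $\spannorm{v_f-v_g}^2/8$, and Pinsker's inequality then converts this into the stated $\ell_1$ bound. You instead observe that $Q_f(\cdot|x)$ is an exponential tilt of $Q_g(\cdot|x)$ by $\phi=v_f-v_g$ (normalized into $[0,s]$), interpolate along the one-parameter tilted family, compute the score $\partial_t Q_t(y|x)=-Q_t(y|x)\bigl(\phi(y)-\bar\phi_t(x)\bigr)$, and bound the resulting integrand---the mean absolute deviation of $\phi$ under $Q_t(\cdot|x)$---by $s/2$ via the half-range (Popoviciu-type) bound; all steps check out, including the tilting identity and the MAD bound, and you recover the same constant $1/2$. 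Your argument is more elementary in that it uses no information-theoretic inequalities, only calculus and a deviation bound, and the path-of-tilts viewpoint is arguably more structurally transparent; the paper's route is shorter once Hoeffding's lemma and Pinsker are taken off the shelf. One remark: your closing claim that ``a Pinsker-style route through the KL divergence produces only an $O(\sqrt{s})$ rate'' is inaccurate---that route is exactly what the paper uses, and it yields the linear rate because the KL itself is bounded \emph{quadratically} in $s$ (by $s^2/8$ via Hoeffding's lemma), so that the square root from Pinsker lands precisely on $s/2$; the $O(\sqrt{s})$ loss would only occur if the KL were bounded merely linearly in $s$.
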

\begin{proof}
 Let us study the relative entropy between $Q_f(\cdot|x)$ and $Q_g(\cdot|x)$ for any fixed $x$:
 \[
 \begin{split}
  \kl{Q_f(\cdot|x)}{Q_g(\cdot|x)} =& \sum_y Q_f(y|x) \log\frac{Q_f(y|x)}{Q_g(y|x)}
  \\
  =& \sum_y Q_f(y|x) \pa{v_g(y) - v_f(y)} + \log\frac{\sum_y P(y|x)e^{-v_g(y)}}{\sum_y P(y'|x)e^{-v_f(y')}}
  \\
  =& \sum_y Q_f(y|x) \pa{v_g(y) - v_f(y)} + \log\frac{\sum_y P(y|x)e^{-v_f(y)}}{\sum_y P(y'|x)e^{-v_f(y')}}{e^{-v_g(y)+v_f(y)}}
  \\
  \le& \frac{\spannorm{v_g - v_f}^2}{8},
 \end{split}
 \]
 where the second equality follows from straightforward calculations and the last step follows from Hoeffding's lemma (see, e.g., Lemma~A.1 
in \citealt{CBLu06:Book}). Now the statement of the lemma follows from applying Pinsker's inequality.
\end{proof}
Now, we can conclude the proof of Lemma~\ref{lem:change} by combining Lemmas~\ref{lem:c_to_v} and~\ref{lem:v_to_Q} with the easily-seen fact
$\infnorm{\bc_{t+1}-\bc_t}\le 1/t$.
\jmlrQED

\subsection{The proof of Lemma~\ref{lem:price}}\label{app:price}
Observe that, by the definition of the algorithm and the form of $Q_t$ given by~\eqref{eq:optQ}, we have
\[
 \kl{Q_{t}(\cdot|x)}{P(\cdot|x)} = v_t(x) + \lambda_t - \bc_{t}(x) - \sum_{x'} Q_t(x'|x) v_t(x'),
\]
which, by using $\tloss_t = f(\cdot;c_t)$ and the form of $f$ given in Eq.~\eqref{eq:statcost} implies that 
\begin{equation}\label{eq:lpiform}
\begin{split}
 \tloss_t(\pi_t) &= \sum_{x} \mu_t(x) \pa{c_t(x) + v_t(x) + \lambda_{t} - \bc_{t}(x) - \sum_{x'} Q_t(x'|x) v_t(x')}
 \\
 &= \sum_{x} \mu_t(x) \pa{c_t(x) + \lambda_{t} - \bc_{t}(x)},
\end{split}
\end{equation}
where the second equality follows from the fact that $\sum_{x} \mu_t(x) Q_t(x'|x) = \mu_t(x')$. 
Combining this with the analogous expression for $\tloss_t(\pi_{t+1})$, we get
\begin{equation}\label{eq:lpidiff}
\begin{split}
 \tloss_t(\pi_{t}) - \tloss_t(\pi_{t+1}) &= \pa{\mu_{t} - \mu_{t+1}}\transpose c_t + \mu_t\transpose \bc_t - 
 \mu_{t+1}\transpose \bc_{t+1} + \lambda_{t} - \lambda_{t+1}
 \\
 &= \pa{\mu_{t} - \mu_{t+1}}\transpose \pa{c_t + \bc_{t}} + \mu_{t+1}\transpose \pa{\bc_{t} - \bc_{t+1}}
 + \lambda_{t} - \lambda_{t+1}
 \\
 &\le 2 \onenorm{\mu_{t+1} - \mu_t} + \infnorm{\bc_{t+1} - \bc_{t}} + \lambda_{t} - \lambda_{t+1}.
\end{split}
\end{equation}
It remains to bound the last two terms. Defining $\Delta_t = \bc_{t+1} - \bc_{t}$, we have
\[
\begin{split}
 \lambda_{t+1} &= \min_\pi f(\pi;\bc_{t+1}) = \min_\pi \Bpa{f(\pi;\bc_{t}) + \sum_{x,x'} \pi(x,x') \Delta_t(x)}
 \\
 &\ge \min_\pi f(\pi;\bc_{t}) + \min_{\pi'} \sum_{x,x'} \pi'(x,x') \Delta_t(x) \ge \lambda_{t} - \infnorm{\Delta_t},
\end{split}
\]
where the second equality uses the form of $f$, and the last step uses the fact that $\pi'$ is a probability distribution over 
$\X\times\X$. This gives
\[
 \tloss_t(\pi_{t}) - \tloss_t(\pi_{t+1}) \le 2 \onenorm{\mu_{t+1} - \mu_t} + 2 \infnorm{\Delta_t}.
\]

It remains to bound $\onenorm{\mu_{t+1} - \mu_t}$. A simple argument (see, e.g., Lemma~4 of \citealt{neu14o-mdp-full}) shows that 
\[
\onenorm{\mu_{t+1} - \mu_t} \le 
\max\ev{\tau(Q_t),\tau(Q_{t+1})} \max_x \onenorm{Q_{t+1}(\cdot|x)-Q_t(\cdot|x)}.
\]
Now, the first factor can be bounded by $\tau$ and the second by appealing to Lemma~\ref{lem:change}. The proof is 
concluded by plugging the above bounds into Equation~\eqref{eq:lpidiff}, using $\infnorm{\Delta_t} \le 1/t$, summing up both sides, and 
noting that $\sum_{t=1}^T 1/t \le 1 + 
\log T$. \jmlrQED

\subsection{The proof of inequality~\eqref{eq:mu_to_p}}\label{app:mu_to_p}
We will now prove the inequality
\[
\onenorm{\mu_t - p_t} \le 2 e^{-\pa{t-1}/\tau} +  \tau \pa{\tau+1} \sum_{n=1}^{t-1} \frac{e^{-(t-n)/\tau}}{n}.
\]
If $t=1$, the inequality clearly holds as $\onenorm{\mu_1 - p_1} \le 2$. We let $\varepsilon_t = \max_x \onenorm{Q(\cdot|x) - 
Q_{t-1}(\cdot|x)}$. 
By the triangle inequality, we have
\[
\begin{split}
 \onenorm{p_t - \mu_t} &\le \onenorm{p_t - \mu_{t-1}} + \onenorm{\mu_{t-1} - \mu_t}
 \\
 &\le e^{-1/\tau} \onenorm{p_{t-1} - \mu_{t-1}} + \pa{\tau+1} \varepsilon_t,
\end{split}
\]
where we used the fact that $p_t = p_{t-1}\transpose Q_t$, and $\onenorm{\mu_{t-1} - \mu_t} \le \pa{\tau+1} \varepsilon_t$, which follows 
from Lemma~4 of \citet{neu14o-mdp-full}. Continuing recursively, we obtain
\[
 \begin{split}
  \onenorm{p_t - \mu_t} 
  &\le e^{-1/\tau} \pa{e^{-1/\tau} \onenorm{p_{t-2} - \mu_{t-2}} + \pa{\tau+1} \varepsilon_{t-1}} + \pa{\tau+1}
\varepsilon_t
\\
  &\vdots
  \\
  &\le e^{-{t-1}/\tau} \onenorm{p_{1} - \mu_{1}} + \pa{\tau+1} \sum_{n=1}^{t} \varepsilon_n e^{-\pa{t-n}/\tau}
  \\
  &\le e^{-{t-1}/\tau} \onenorm{p_{1} - \mu_{1}} + \pa{\tau+1} \sum_{n=1}^{t} \frac{\tau e^{-\pa{t-n}/\tau}}{n},
 \end{split}
\]
where the last inequality follows from $\varepsilon_t \le \tau/t$, which holds by Lemma~\ref{lem:change}.

\end{document}